
\RequirePackage{amsthm}
 
\documentclass[sn-mathphys-ay, iicol]{sn-jnl}

\usepackage{graphicx}%
\usepackage{multirow}%
\usepackage{amsmath,amssymb,amsfonts}%
\usepackage{amsthm}%
\usepackage{wasysym}
\usepackage{mathrsfs}%
\usepackage[title]{appendix}%
\usepackage{xcolor}%
\usepackage{textcomp}%
\usepackage{manyfoot}%
\usepackage{booktabs}%
\usepackage{algorithm}%
\usepackage{algorithmicx}%
\usepackage{algpseudocode}%
\usepackage{listings}%
\usepackage{anyfontsize}
\usepackage{stackengine}
\usepackage{scalerel}
\usepackage{adjustbox}


\theoremstyle{thmstyleone}%
\newtheorem{theorem}{Theorem}
\newtheorem{proposition}[theorem]{Proposition}%
\newtheorem{lemma}[theorem]{Lemma}
\newtheorem{corollary}[theorem]{Corollary}

\theoremstyle{thmstyletwo}%
\newtheorem{example}{Example}%
\newtheorem{remark}{Remark}%

\theoremstyle{thmstylethree}%
\newtheorem{definition}{Definition}%

\def\tang{\ThisStyle{\abovebaseline[0pt]{\scalebox{-1}{$\SavedStyle\perp$}}}}

\newcommand{\zb}[1]{\ensuremath{\boldsymbol{#1}}}

\usepackage[capitalize,noabbrev]{cleveref}

\DeclareMathOperator*{\argmin}{arg\,min}

\newcommand{\dd}{\textnormal{d}}
\newcommand{\dist}{\textnormal{dist}}

\newcommand{\diff}
{\textnormal{Diff}}

\newcommand{\fcdot}{\; \cdot\; }
\newcommand{\Hsp}{\mathcal{H}(V, M)}
\newcommand{\Fsp}{\mathcal{F}(V, M)}
\newcommand{\step}{\textnormal{step}}

\raggedbottom

\newcommand\blfootnote[1]{%
  \begingroup
  \renewcommand\thefootnote{}\footnote{#1}%
  \addtocounter{footnote}{-1}%
  \endgroup
}

\begin{document}

\title[Article Title]{Manifold GCN: Diffusion-based Convolutional Neural Network for Manifold-valued Graphs}


\author*[1,2,3]{\fnm{Martin} \sur{Hanik}}\email{hanik@zib.de}

\author[2]{\fnm{Gabriele} \sur{Steidl}}\email{steidl@math.tu-berlin.de}

\author[3]{\fnm{Christoph} \spfx{von} \sur{Tycowicz}}\email{vontycowicz@zib.de}

\author[+]{for~the~Alzheimer’s~Disease~Neuroimaging~Initiative}

\affil*[1]{\orgname{BIFOLD—Berlin Institute for the Foundations of Learning and Data}, \orgaddress{\street{Ernst-Reuter Platz 7}, \city{Berlin}, \postcode{10587}, \country{Germany}}}

\affil[2]{\orgname{Technical University Berlin}, \orgaddress{\street{Straße des 17. Juni 136}, \city{Berlin}, \postcode{10623}, \country{Germany}}}

\affil[3]{\orgname{Zuse Institute Berlin}, \orgaddress{\street{Takustraße 7}, \city{Berlin}, \postcode{14195}, \country{Germany}}}


\abstract{We propose two graph neural network layers for graphs with features in a Riemannian manifold.
First, based on a manifold-valued graph diffusion equation, we construct a diffusion layer
that can be applied to an arbitrary number of nodes and graph connectivity patterns.
Second, we model a tangent multilayer perceptron by transferring ideas from the  vector neuron framework 
to our general setting.
Both layers are equivariant under node permutations and the feature manifold's isometries. These properties have led to a beneficial inductive bias in many deep-learning tasks. Furthermore, they enable novel, more flexible feature designs.
Numerical examples on synthetic data and an Alzheimer's classification application on triangle meshes of the right hippocampus demonstrate the usefulness of our new layers: While they apply to a much broader class of problems, they outperform task-specific state-of-the-art networks.}

\keywords{graph neural networks, manifold-valued features, diffusion, hyperbolic embeddings, shape classification}



\maketitle

\blfootnote{${}^{+}$Data used in the preparation of this article was obtained from the Alzheimer’s Disease
Neuroimaging Initiative (ADNI) database (adni.loni.usc.edu). As such, the investigators
within the ADNI contributed to the design and implementation of ADNI and/or provided data
but did not participate in the analysis or writing of this report. A complete listing of ADNI
investigators can be found at: \url{http://adni.loni.usc.edu/wp-content/uploads/how_to_apply/ADNI_Acknowledgement_List.pdf}
}

\section{Introduction}
Graph neural networks (GNNs) have gained widespread popularity for the analysis of graph-structured data. 
They have found applications in various areas, such as 
bioinformatics~\citep{torng2019graph,Zhang_ea2021,yi2022graph}, 
physics~\citep{sanchez2018graph,pmlr-v119-sanchez-gonzalez20a,shlomi2020graph}, 
and social sciences~\citep{fan2019graph,wu2020graph,kumar2022influence,wu_ea2023}.  
GNNs utilize the features attributed to the graph's nodes as an important source of information.
Here, the vast majority of the GNNs rely on an Euclidean feature space. 
However, there are many applications where the data lies in a non-flat manifold; for example, 
on the sphere~\citep{fisher1993statistical,de2022riemannian,nava2023sasaki}, the special orthogonal group SO(3) \citep{BCHPS2016,GNHSLP2022},
the space of symmetric positive definite (SPD) matrices~\citep{cherian2016positive,FLETCHER2007250,YOU2021117464,pennec2020manifold,hanik2021predicting,wong2018riemannian,ju2023graph,nerrise2023explainable}, 
the Stiefel manifold~\citep{lin2017bayesian,chakra2019statStiefel,chen2020learning,mantoux2021understanding}, 
and 
the Grassmannian manifold~\citep{turaga2011statistical,huang2015projection,huang2018building}; also, Lie Group-valued features are 
of interest~\citep{ParkBobrowPloen1995,hanik2022bi,vemulapalli2014human,vemulapalli2016rolling,vonTycowicz_ea2018,AmbellanZachowTycowicz2021}. Even more, Riemannian feature modeling is considered to be a unifying paradigm~\citep{sun2026riemanngl} that could be necessary for
advancing towards graph foundation models~\citep{yu2026riemannian}.

So far, there exist only networks that are designed for particular types of manifolds, such as (products of) spaces of constant curvature~\citep{bachmann2020constant,zhu2020graph, sun2022self,deng2023fmgnn}
or spaces of SPD matrices~\citep{huang2017riemannian,ju2023graph,zhao2023modeling}, and these architectures cannot deal with \textit{general} manifold-valued features.
    
This paper presents a GNN architecture that can work with data from \textit{any} Riemannian manifold 
for which geodesics can be efficiently approximated. Contrary to existing GNNs, it can thus also be applied to other feature manifolds than spheres, hyperbolic spaces, and SPD matrices, such as rotation matrices and shape spaces.
The practical advantages of the network are demonstrated through two classification experiments. Remarkably, our networks outperform state-of-the-art methods, while being more generally applicable than their manifold-specific counterparts.

The core element of our architecture is a convolution-type layer that discretizes \textit{graph diffusion in manifolds}. We thereby utilize a link between convolution and diffusion processes that GNNs have successfully used for Euclidean data~\citep{atwood2016diffusion, li2018diffusion, gasteiger2019diffusion}.
Combined with an explicit solution scheme for the diffusion equation, the \emph{new diffusion layer} exhibits the same local aggregation of features that makes convolution layers so successful~\citep{bronstein2021geometric}. Furthermore, it is equivariant under node permutations and the feature space isometries. 
These properties (approximately) hold for many relationships that deep learning techniques are intended to capture~\citep{bronstein2021geometric} and have been found to benefit the learning~\citep{satorras2021n,unke2021se,gerken2023geometric,yang2023alphafold2}. To the best of our knowledge, we are the first to develop GNN layers for manifolds that exhibit equivariance under isometries.

Existing models~\citep{kipf2017semisupervised,bronstein2021geometric,sharp2022diffusionnet,dai2021hyperbolic} can account for the non-Euclidean structure of the
domain (that is, graphs or surfaces); however, except in very special cases, they fail to exploit the regularities of the co-domain. Being faithful to the geometry of both provides a strong inductive bias and enables novel feature designs that diverge considerably from previous methodologies. In our experiments, we demonstrate that our model improves performance by utilizing feature manifolds beyond constant or non-positive curvature spaces and by employing novel embedding strategies for abstract graphs made possible by feature symmetry equivariance.

In addition to the diffusion layer, we propose a novel \textit{tangent multilayer perceptron} that can be seen as a generalization of a fully connected multilayer perceptron to manifolds, including nonlinearities between layers. It shares the symmetries of the diffusion layer.  
    
We test a network incorporating our novel layers on two classification tasks: a benchmark on synthetic graphs and a classification of Alzheimer's disease from triangle meshes of the right hippocampus. Our method is as good as or better than its competition in both cases. Our layers lead to fast learning, even with smaller data sets.
The implementation of our network units is available as part of the open-source \texttt{Morphomatics} library~\citep{Morphomatics}. Data and code to reproduce the experiments can be found at \url{https://github.com/morphomatics/MfdGCN}.

\noindent
\emph{Outline of the paper.}
We start by recalling related work in Section~\ref{sec:related_work}.
Then, in Section~\ref{sec:background}, we provide the necessary background on Riemannian manifolds and graph Laplacians
with manifold-valued features.
In Section \ref{sec:diff_layer}, we introduce our novel diffusion layer and prove its equivariance properties.
Our second layer, the tangent multilayer perceptron, is constructed in Section~\ref{sec:tLL} 
and shows the same desirable equivariance behavior as the diffusion layer.
Both layers are combined within a generic graph convolutional neural network (GCN) block in Section~\ref{sec:GCN}.
Section~\ref{sec:numerics} contains our numerical results.
Finally, Section~\ref{sec:conclusions} gives a summary and conclusions for future work.
The appendices contain further theoretical results on the diffusion layer.

\section{Related Work} \label{sec:related_work}
This section reviews related works and highlights the differences in our contribution. 
We distinguish between two types of architecture:
First, we start with an overview of GNNs for graphs \textit{with Euclidean features} 
that discretize a continuous flow in the feature space. (We refer to \citep{wu2020comprehensive} 
for an overview of the full landscape of Euclidean GNNs.)
Then, we collect deep learning architectures that can handle \textit{manifold-valued} features. 

\subsection{Flow-based GNNs in the Euclidean Space}

\subsubsection{Diffusion networks}
\cite{atwood2016diffusion} and \cite{gasteiger2019diffusion} used diffusion to construct powerful (graph) convolutional neural networks. 
Several papers improved and modified the basic idea: 
Adaptive support for diffusion-convolutions was proposed by~\cite{zhao2021adaptive}; implicit nonlinear diffusion was introduced to capture long-range interactions by~\cite{chen2022optimization}; and
\cite{liao2019lanczosnet} used a polynomial filter on the graph Laplacian that corresponds to a multi-scale diffusion equation.
    
Also based on a graph diffusion equation, a broad class of well-performing GNNs was identified by~\cite{chamberlain2021grand}. Shortly afterward, \cite{thorpe2021grand++} showed that adding a source term to the equation is helpful in specific scenarios. 
\cite{chamberlain2021beltrami} additionally considered a nonlinear extension of the graph diffusion equation, and found that many known GNN architectures are instances of the resulting class of GNNs. 
All these approaches allow using non-Euclidean distances for data \textit{encoding}. 
However, in contrast to our method, they never treat the features as intrinsic to a manifold.

A different line of work uses diffusion processes in the context of cellular sheaf theory for 
graph learning~\citep{hansen2020sheaf,bodnar2022neural}.
    
Finally, a diffusion-based architecture especially for graphs that discretize 2-manifolds (for example, triangular surface meshes) was proposed by~\cite{sharp2022diffusionnet}. While the underlying idea of using diffusion for information aggregation is quite similar to our method, it can only deal with (several) scalar functions defined over a discrete surface. In particular, it cannot handle abstract graphs that are not embedded in Euclidean space and do not come with a notion of tangent plane and gradient; furthermore, it can only erroneously interpret manifold-valued features as collections of scalar functions.

\subsubsection{Neural ordinary differential equations}
    An approach similar to diffusion-based GNNs is the neural ordinary differential equations (neural ODE) framework~\citep{chen2018neural}. Instead of diffusion-based methods, which treat GNNs as particular instances of the discretization of a \textit{partial} differential equation in both space and time~\citep{chamberlain2021beltrami}, it discretizes an underlying ODE. The idea was transferred to graph learning by~\cite{poli2019graph} and \cite{xhonneux2020continuous}. 
    
    The discrete nature of the underlying graph leads to the fact that, although motivated by a PDE, our network is built on a set of ODEs (one for each node). Consequently, our proposed method also fits into the neural ODE framework. 

\subsection{Deep Neural Networks for Manifold-valued Signals}
 Here, we distinguish between three approaches.

\subsubsection{Networks for manifold-valued grids}
    ManifoldNet~\citep{chakraborty2020manifoldnet} is a convolutional neural network that can take manifold-valued images as input. Convolutions in manifolds are generalized through weighted averaging. However, in contrast to our approach, their layers can only work on \textit{regular grids} as an underlying structure and not on general graphs. A similar convolutional approach using diffusion means was presented by~\cite{sommer2020horizontal}.

\subsubsection{GNNs for special manifolds} \label{sec:gnns_manifolds}
    There have been two primary motivations for building GNNs that can deal with manifold-valued features: first, embedding abstract graphs in curved manifolds to utilize their geometric structure for downstream tasks, and second, learning from interrelated measurements that take values in some manifold. 
    Surprisingly, most existing work focuses on the first aim. In both cases, though, the resulting GNNs can only handle data from very restricted classes of manifolds. We discuss the networks and their associated spaces in the following.
        
    There is mounting empirical evidence that embedding graphs in non-Euclidean spaces can help with various tasks. For instance, \cite{krioukov2010hyperbolic} observed that a hyperbolic representation is beneficial when dealing with graphs with a hierarchical structure. Later, products of constant-curvature spaces appeared as appropriate embedding spaces for more classes of graphs~\citep{gu2018learning}. 
    In graph representation learning, these observations were further backed up by specific constructions that successfully embed (abstract) graphs into hyperbolic space for downstream tasks~\citep{chami2019hyperbolic,liu2019hyperbolic, dai2021hyperbolic,zhang2021lorentzian, pmlr-v202-yang23u,van2023poincare}. 
    Further studies showed that other spaces can be better suited than hyperbolic ones for some learning tasks on broader classes of graphs.
    While (products of) spaces of constant curvature have been used frequently~\citep{bachmann2020constant,zhu2020graph, sun2022self,deng2023fmgnn,sun2024motif,xue2024residual}, also spaces of non-constant curvature have been utilized successfully in the form of Grassmannians~\citep{cruceru2021computationally} and SPD~\citep{cruceru2021computationally,zhao2023modeling} manifolds. 
    \cite{xiong2022pseudo} also employed pseudo-Riemannian manifolds as embedding spaces in the form of pseudo-hyperboloids.

    All the above approaches are aimed at embedding abstract graphs for downstream tasks. They thus have the common assumption that the \textit{initial} embedding into the manifold can be learned. 
    By contrast, \cite{ju2023graph} tackled an application where the graphs already come with an initial embedding: They considered
    SPD-valued graphs that naturally arise from EEG imaging data, and developed network blocks that could navigate the space of SPD matrices with the affine invariant metric. This led to excellent classification results.

    Unfortunately, all the mentioned GNN architectures have in common that they are not equivariant or invariant under the isometries of the manifold. Whenever the function-to-be-learned (approximately) possesses one of these properties, this can lead to suboptimal performance: On the one hand, non-invariant or non-equivariant networks tend to have many ``unnecessary'' parameters that make successful training more difficult. (We refer to the discussion of~\cite{bronstein2021geometric} on the curse of dimensionality in machine learning and how shift-invariant/equivariant CNNs helped to overcome it.)
    On the other hand, the training data must explicitly contain the transformations to enable the network to ``discover'' the equivariance/invariance itself~\citep{gerken2023geometric}.

\subsubsection{Flow-based networks on manifolds}
    Neural ODEs were transferred to manifolds in~\cite{lou2020neural} and \cite{katsman2023riemannian}. 
    Besides that, there are also deep learning approaches via discrete~\citep{rezende2020normalizing} and continuous~\citep{mathieu2020riemannian,rozen2021moser,chen2024flow} normalizing flows in Riemannian manifolds. For an overview of normalizing flows and their generalizations for Euclidean data, we refer, for example, to~\cite{HHG2023} and \cite{RH2021}.
    In contrast to our approach, those authors aim to solve a given ODE to learn intricate probability distributions instead of using the ODE to build convolutional layers. The same is true for generative diffusion models~\citep{de2022riemannian,thornton2022riemannian}.

\section{Background} \label{sec:background}
    In this section, we recall the necessary background from differential geometry (see, for example,~\cite{Petersen2006} for more), the notation of graph Laplacians for manifold-valued graphs by~\cite{bergmann2018graph}, and the definitions of invariance and equivariance. 

\subsection{Riemannian Geometry}
    A Riemannian manifold is a $d$-dimensional manifold\footnote{Also, when not mentioned explicitly, we always assume that manifolds and all maps between manifolds are smooth (that is, infinitely often differentiable).} $M$ together with a Riemannian metric $\langle \fcdot, \fcdot \rangle$. The latter assigns to each tangent space $T_pM$ at $p \in M$ an inner product $\langle \fcdot, \fcdot \rangle_p$ that varies smoothly in $p$. It induces a norm $\| \fcdot\|_p$ on each tangent space $T_pM$ and a distance function $\dist: M \times M \to \mathbb R_{\ge 0}$. Furthermore, it determines the so-called Levi-Civita connection $\nabla$, allowing vector field differentiation. For vector fields $X$ and $Y$ on $M$, we denote the derivative of $Y$ along $X$ by $\nabla_X Y$.
  
    A geodesic is a curve $\gamma: I \to M$ on an interval $I \subseteq \mathbb{R}$ without tangential acceleration, that is,
    $\nabla_{\gamma'}\gamma' = 0,$
    where $\gamma' := \frac{\dd}{\dd t} \gamma$. 
    The manifold $M$ is complete if every geodesic can be defined on the whole $\mathbb{R}$.
    In the rest of the paper, we consider only complete, connected Riemannian manifolds $M$.
    Locally, geodesics are the shortest paths, and the length of a geodesic connecting two points $p, q \in M$ is equal to the distance $\dist(p,q)$.
    Importantly, every point in $M$ has a so-called normal convex neighborhood $U \subseteq M$ in which any pair $p,q \in U$ can be joined by a unique length-minimizing geodesic $\gamma: [0,1] \to M$ with $\gamma(0)=p$ and $\gamma(1)=q$ that lies entirely in $U$.
    We will need the Riemannian exponential map: for $X \in T_pM$, let $\gamma_X$ be the geodesic with $\gamma(0) = p$ and $\gamma'(0) = X$. Since geodesics are solutions to ordinary differential equations, there is a neighborhood $W \subset T_pM$ of $0 \in T_p M$ on which the so-called exponential map
    \begin{align*}
        \exp_p &: W \to M, \quad  X \mapsto \gamma_X(1)
    \end{align*}
    at $p$ is well defined.
    It can be shown that $\exp_p$ is a local diffeomorphism. 
    Let $D_p \subset T_pM$ be the maximal neighborhood of $0 \in D_p$, where this is the case, and set $D_pM:= \exp_p(D_p)$. 
    Then the inverse $\log_p: D_pM \to D_p$ of $\exp_p$ is defined and is called the Riemannian logarithm at $p$. 
    The points for which $\log_p$ is not defined constitute the so-called cut locus of $p$.
    The product $M^n$ is again a Riemannian manifold, everything working component-wise. 

    An important class of maps is the group of isometries (or symmetries) 
    of $M$. These are diffeomorphisms $\Phi: M \to M$ that conserve the Riemannian metric; that is, for every $X, Y \in T_pM$, $p \in M$, we have 
    $$\langle \dd_p\Phi(X), \dd_p\Phi(Y) \rangle_{\Phi(p)} = \langle X, Y \rangle_p,$$ 
    where $\dd_p\Phi: T_pM \to T_{\Phi(p)}M$ is the differential of $\Phi$ at $p$.
    Isometries preserve the distance between any two points, map geodesics into geodesics, and map normal convex neighborhoods to normal convex neighborhoods.
    As for any diffeomorphism, we have
    \begin{equation} \label{eq:inv_dphi}
        (\dd_p \Phi)^{-1} = \dd_{\Phi(p)} \Phi^{-1} 
    \end{equation}
    for each $p \in M$.
    Further, the exponential and logarithm commute with an isometry $\Phi$ as follows:
    \begin{align}
        \exp_{\Phi(p)}\big(\dd_p\Phi(X)\big) &= \Phi \big( \exp_p(X) \big), \quad X \in T_pM, \label{eq:exp_isom} \\
        \dd_p \Phi \big(\log_p(q) \big) &= \log_{\Phi(p)} \big(\Phi(q)\big), \; q\in D_pM. \label{eq:log_isom}
    \end{align}
    The set of all isometries of $M$ constitutes the isometry group of $M$.

    Another fundamental property of Riemannian geometry is curvature. The geodesics define the real-valued sectional curvatures $K_p$ that intuitively measure how much surface images of planes under $\exp_p$ bend.
    Important model manifolds of constant sectional curvature $K := K_p \equiv c$ are spheres ($c>0$), Euclidean spaces ($c=0$), and hyperbolic spaces ($c<0$).
    The curvature has a profound influence on the properties of the manifold.
    In particular, if $M$ is a so-called Hadamard manifold~\citep[Ch.\ 6]{Petersen2006}, a simply connected manifold with non-positive sectional curvature $K_p \le 0$ everywhere, the whole manifold is a normal convex neighborhood. 
    Apart from Euclidean and hyperbolic spaces, the spaces of symmetric positive definite matrices with the affine invariant metric belong to this class.
		
\subsection{Graph Laplacian in Manifolds}
    Let $G = (V, E, w)$ be a directed graph with vertex (node) set $V = \{v_1, \dots, v_n\}$, directed edge set $E \subset V \times V$, and positive edge weights $w : E \to \mathbb{R}_+$. 
    The ordering of the vertices encodes the direction of the edge, that is, $(v, u)$ is an edge going from $v$ to $u$; we also write $u \sim v$ to denote that there exists such an edge.

    Starting with the set of functions
    $$\mathcal{F}(V, M) := \{f : V \to M \},$$ 
    we define the set of \textit{admissible} $M$-valued vertex functions $\mathcal{H}(V, M) \subseteq \mathcal{F}(V, M)$ by
    $$
             \mathcal{H}(V, M) := \{f\ |\  
            f(u) \in D_{f(v)} M \; \forall (v, u) \in E \}.
    $$
    The definition ensures that $\log_{f(v)}f(u)$ exists whenever $(v, u) \in E$. The equality $\mathcal{H}(V, M) = \mathcal{F}(V,M)$ holds if and only if $\log_p(q)$ is defined for any $p,q \in M$. This is the case, for instance, for Hadamard manifolds. When a graph $G$ comes with a map $f \in \mathcal{H}(V, M)$, we call $f$ \textit{(vertex) features} of $G$; in this case, we also write $G = (V, E, w, f)$.
    
    For $f \in \mathcal{H}(V, M)$, we denote the disjoint union of tangent spaces at the values of $f$ by 
    $$T_f M := \bigcup_{v \in V} T_{f(v)}M$$
    and the space of tangent space functions corresponding to $f$ by
    $$
        \mathcal{H}(V, T_fM) := \{F : V \to T_fM\ |\ F(v) \in T_{f(v)}M \}.
    $$
    Now the graph Laplacian  $\Delta: \mathcal{H}(V, M) \to \mathcal{H}(V, T_fM)$ is defined, for $f \in \mathcal{H}(V, M)$, by
        \begin{equation}\label{eq:graph_laplace}
            \Delta f(v) = -\!\!\!\sum_{u \sim v} w(v,u) \log_{f(v)}f(u) \in T_{f(v)}M;
        \end{equation} 
    see~\cite{bergmann2018graph}.				
    If $M = \mathbb{R}^d$, this reduces to a well-known 
    graph Laplacian for Euclidean functions; see, for example,~\cite{GO2008}.

 \subsection{Equivariance and Invariance}
    Equivariance and invariance are essential properties of many neural networks. Let $S$ be a set and $\mathcal G$ a group with neutral element $e$.
    Assume that $\mathcal G$ acts on $S$ from the left; that is, there is a (left) group action $\rho: \mathcal G \times S \to S$ 
    such that $\rho(e, s) = s$ and $\rho(g, \rho(h, s)) = \rho(gh, s)$. 
    A function $\phi: S \to \mathbb{R}$ is called invariant with respect to the group action of $\mathcal G$ if 
    $$\phi(\rho(g,s)) = \phi(s)$$
    for all $g \in \mathcal G$ and $s \in S$.
    A function $\Phi : S \to S$ is called equivariant with respect to the group action of $\mathcal G$ if
    $$\Phi(\rho(g,s)) = \rho(g,\Phi(s))$$
    for all $g \in \mathcal G$ and $s \in S$.
    Note that the concatenation of an equivariant and invariant function is invariant.

    The equivariance of neural networks under a group operation is usually achieved by stacking (that is, concatenating) layers that are themselves equivariant. When an invariant network is the goal, several equivariant layers are followed by at least one invariant layer before the final output is generated.
		
    \section{Diffusion Layer} \label{sec:diff_layer}

    \begin{figure}[!t]
        \centering
        \includegraphics[width=0.3\textwidth]{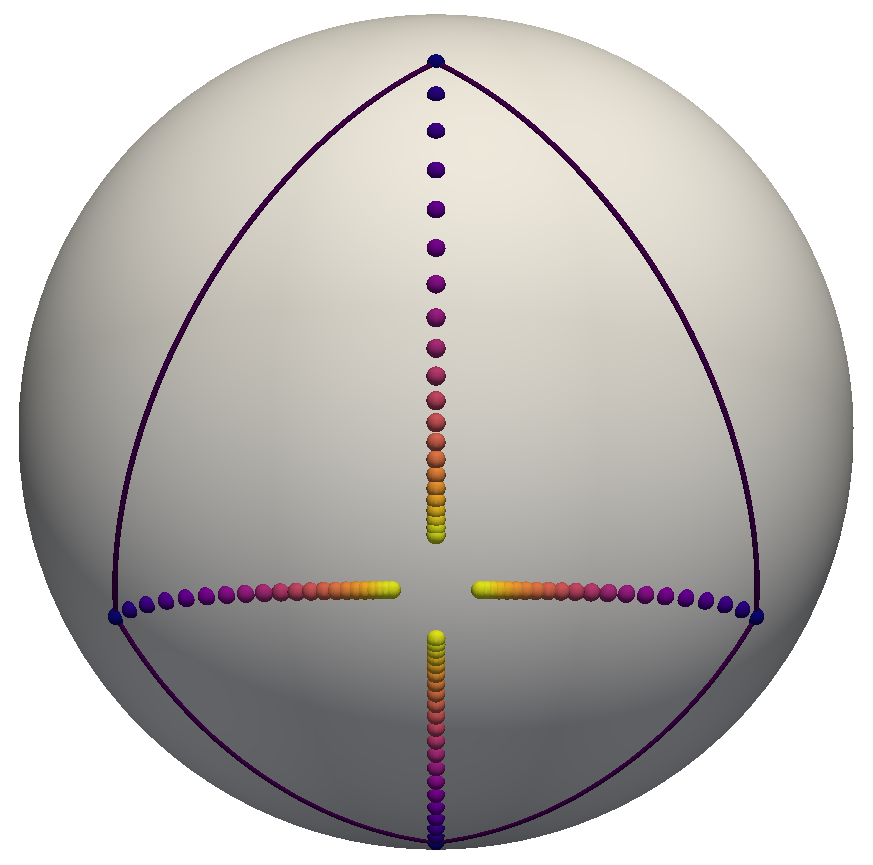}
        \caption{Diffusion (indicated by color) of the vertices of a rectangle graph on the 2-sphere $\textnormal{S}^2$}
        \label{fig:diffusion}
    \end{figure}
 
    In this section, we introduce a novel diffusion layer.
    Let $G=(V, E,w,f)$ be a graph with feature map $f$.
    We augment $f$ with a time parameter $t$ and  consider, for $a>0$, the $M$-valued graph diffusion equation
    \begin{equation} \label{eq:diffusion}
        \begin{cases}
            \frac{\partial}{\partial t} \widetilde{f}(v, t) = -\Delta \widetilde{f}(v, t), & v \in V, \quad t \in (0, a), \\
            \widetilde{f}(v, 0) = f(v), & v \in V.
        \end{cases}
    \end{equation}

    Figure~\ref{fig:diffusion} depicts the diffusion of a rectangle graph with constant edge weights. In Appendix~\ref{app:diff_equation}, we show that there always exists a solution to Equation~\eqref{eq:diffusion} that lives for some time.
    We also have the following theorem.				
    \begin{theorem}\label{thm:sln}
        Let $G = (V, E, w, f)$ be a graph with positive weights and features $f \in \Hsp$ 
		such that the smallest closed geodesic ball that contains the graph's features is convex. 
		Assume $\sum_{u \sim v} w(v, u) \le 1$ for all $v \in V$. 
		Then, Equation~(\ref{eq:diffusion}) has a solution $\widetilde{f} : V \times [0, \infty) \to M $ that is defined for all $t \ge 0$.         
    \end{theorem}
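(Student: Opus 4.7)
My plan is to combine the local-in-time existence guaranteed by Appendix~\ref{app:diff_equation} with an invariance argument showing that the trajectory remains inside a compact set on which the defining vector field is smooth. Let $[0,T^\ast)$ denote the maximal interval of existence from the local theory. By standard ODE continuation, it suffices to show that $\widetilde f(\cdot,t)$ stays inside a fixed compact subset of $M^V$ for all $t\in[0,T^\ast)$; this forces $T^\ast = \infty$.

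Let $B$ be the smallest closed geodesic ball containing $\{f(v):v\in V\}$, with center $c$ and radius $R$; by hypothesis, $B$ is convex. Two features of $B$ will be used: (i) for any $p,q\in B$ the Riemannian logarithm $\log_p q$ exists and depends smoothly on the two arguments, so the right-hand side of~\eqref{eq:graph_laplace} is a smooth vector field on the compact set $B^V$; and (ii) the squared distance function $F(p):=\tfrac12\dist(c,p)^2$ is convex on $B$ with gradient $\nabla F(p)=-\log_p c$. The goal is to prove that $B^V$ is forward-invariant under~\eqref{eq:diffusion}.

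Setting $r_v(t):=F(\widetilde f(v,t))$ and using~\eqref{eq:graph_laplace} together with the chain rule, I obtain
\begin{equation*}
\frac{d}{dt}r_v(t)=\Big\langle -\log_{\widetilde f(v,t)}c,\; \sum_{u\sim v}w(v,u)\,\log_{\widetilde f(v,t)}\widetilde f(u,t)\Big\rangle_{\widetilde f(v,t)}.
\end{equation*}
The first-order convexity inequality $\langle -\log_p c,\log_p q\rangle_p \le F(q)-F(p)$, applied term by term with $p=\widetilde f(v,t)$ and $q=\widetilde f(u,t)$, yields
\begin{equation*}
\frac{d}{dt}r_v(t)\;\le\;\sum_{u\sim v}w(v,u)\,\bigl(r_u(t)-r_v(t)\bigr).
\end{equation*}
Let $R(t):=\max_{v\in V}r_v(t)$. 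At any node $v_0$ attaining the maximum, each bracket is non-positive; combined with $\sum_{u\sim v_0}w(v_0,u)\le 1$, this makes the right-hand side non-positive. A standard maximum-principle argument (e.g., bounding the upper Dini derivative of $R$ or a Gronwall-type comparison) then gives $R(t)\le R(0)\le R^2/2$, so $\widetilde f(v,t)\in B$ for all $v\in V$ and all $t\in[0,T^\ast)$. Since $B^V$ is compact and the vector field is smooth on it, the solution extends past any finite time, so $T^\ast=\infty$.

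The main obstacle I expect is invoking the convexity of $\dist(c,\cdot)^2$ in item (ii): in the Riemannian setting, this is classical inside a ball of radius strictly below the convexity radius, and it has to be derived cleanly from the bare hypothesis that $B$ is convex (possibly with a short remark pinning down the precise notion of convexity intended, since the unqualified word is used in slightly different senses in the literature). The remaining pieces, in particular the discrete maximum principle and the smoothness of the vector field, are standard once invariance of $B^V$ has been set up.
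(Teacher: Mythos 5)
Your high-level plan coincides with the paper's: show the trajectory stays in the compact configuration set $\overline{B}^n$, then conclude $T^\ast=\infty$ by ODE continuation (the Escape Lemma). The mechanism you use is genuinely different. The paper reasons directly about the direction of the vector field~\eqref{eq:product_field} at $\partial\overline{B}^n$: if $f(v)\in\partial\overline{B}$ and all neighbours lie in $\overline{B}$, each $\log_{f(v)}f(u)$ is subtangential to $\overline{B}$ (the geodesic to $f(u)$ stays inside the convex ball), hence so is the nonnegative combination $-\Delta f(v)$, and~\cite[Lem.~9.33]{Lee2012} prevents escape. You instead propagate the Lyapunov functional $r_v(t)=\tfrac12\dist(c,\widetilde f(v,t))^2$ through a discrete maximum principle, which is a valid alternative route and avoids explicitly invoking the tangent cone.

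The obstacle you flag is, however, a genuine gap. The inequality $\langle-\log_p c,\log_p q\rangle_p\le F(q)-F(p)$ for all $p,q\in\overline{B}$ is the first-order criterion for geodesic convexity of $\tfrac12\dist(c,\cdot)^2$ on $\overline{B}$, which is strictly stronger than what the theorem assumes. Convexity of $\overline{B}$ in the sense used here (Appendix~A: any two points are joined by a unique geodesic lying in $\overline{B}$) yields only the boundary version $\langle-\log_p c,\log_p q\rangle_p\le 0$ for $p\in\partial\overline{B}$, $q\in\overline{B}$; it does not upgrade to the interior convexity inequality without a convexity-radius-type assumption. The economical repair is to run the argument only at the boundary: at the first time some feature reaches $\partial\overline{B}$ while the others are still in $\overline{B}$, apply the boundary inequality term by term to the velocity $-\Delta\widetilde f(v_0,t^\ast)$ and invoke a forward-invariance result (Nagumo's subtangential condition, or again~\cite[Lem.~9.33]{Lee2012}). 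Doing so collapses your proof onto the paper's. A minor additional remark: in your maximum principle the right-hand side is already nonpositive from $w>0$ and $r_u-r_{v_0}\le0$ alone, so the condition $\sum_{u\sim v_0}w(v_0,u)\le1$ is not actually used there.
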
				
The proof is given in Appendix~\ref{app:diff_equation}.
        
As discussed in Section~\ref{sec:related_work}, discrete approximation schemes were used to build neural network layers from diffusion processes in Euclidean space. 
We will transfer this to manifolds. 
To this end, we now introduce the maps that are needed for the 
explicit Euler discretization of Equation~(\ref{eq:diffusion}).

As always in deep learning, we need a nonlinear activation function that prevents depth collapse. To build equivariant/invariant networks, it must not be affected by the isometries of $M$. Since we are not aware of an isometry-equivariant manifold-valued activation function in the literature, we propose the following new one:
With the sigmoid function $\textnormal{sig}(x) := e^x/(1+e^x)$ and $\theta = (\vartheta_1, \vartheta_2) \in \mathbb{R}^2_{\ge 0}$, we define, for any $p \in M$, the nonlinear activation function
$\sigma_p^\theta: T_p M \to T_p M$ by
        \begin{equation*}
               \sigma_p^\theta(X) := \textnormal{sig} \big( \vartheta_1 \|X\|_p - \vartheta_2 \big) X.
        \end{equation*} 
Note that $\sigma_p^\theta$ acts on tangent vectors, which are the images of the Laplace operator~\eqref{eq:graph_laplace}. The activation function has two learnable parameters $\vartheta_1$ and $\vartheta_2$, which increase the network capacity and realize an isometry-equivariant, nonlinear re-scaling of the input vector.

With this activation function, we define for $f \in \mathcal{H}(V,M)$ and $t \ge 0$, 
the $1$-\textit{step map} $\step_{t,\theta} : \mathcal{H}(V,M) \to \mathcal{F}(V, M)$ by
            \begin{equation*}
                \step_{t,\theta} (f) (v) := \exp_{f(v)} \Big(\!\!-t \sigma_{f(v)}^\theta {\left(\Delta f (v) \right)}\Big),\ \ v \in V.
            \end{equation*}
        The map $\step_{t, \theta}$ realizes a ``nonlinearly activated'' explicit diffusion step of length $t$ of Equation~\eqref{eq:diffusion}. 
        Each vector $\Delta f (v)$ can thereby be interpreted as a ``force'' that pushes the feature $f(v)$ towards the weighted Fréchet mean of its graph neighbors; the exponential carries out this movement.
        Note that since the graph Laplacian is only influenced by a node's direct neighbors---its so-called \textit{1-hop neighborhood}---, the 1-step map only aggregates information over these neighborhoods.

\begin{remark} \label{rem:cutLocus}
    As mentioned, $\Hsp = \Fsp$ holds for spaces like Hadamard manifolds. 
    In this case, $\step_{t, \theta}(f)$ must be in the space of admissible functions.
    For spaces with somewhere-positive sectional curvatures,
    $\step_{t,\theta}(f) \notin \mathcal{H}(V, M)$ is generally possible when the features of two vertices that share an edge lie in their respective cut loci. 
    However, any maximal neighborhood $D_p$, on which the exponential is invertible, 
    is dense in $M$; see, for example,~\cite[Cor.\ 28.2]{Postnikov2013}. 
    Thus, every cut locus has a null measure under the manifold's volume measure~\citep[Lem.\ III.4.4]{sakai1996riemannian}. 
    Consequently, also in positively curved manifolds, $\step_{t,\theta}(f)$ 
    lies in $\Hsp$ except for some very exceptional cases.
    
    In Appendix~\ref{app:l_step}, we additionally show that the 1-step map always yields admissible features
    even in positively curved spaces whenever the data is ``local enough'' and the step size is not too large.

    We never experienced problems with the cut locus in any of our experiments presented later.
\end{remark}
   
Since numerical approximation schemes usually employ several approximation steps, we extend the 1-step map for $\ell > 1$ and 
$f \in  \Hsp$ with $\step_{t,\theta}^{l-1}(f) \in \Hsp$ as
            \begin{align} \label{l-step}
                \step^{\ell}_{t,\theta} (f) 
								:= \underbrace{\step_{t,\theta} \circ \dots \circ \step_{t,\theta}}_{\ell \text{ times}}(f).
            \end{align}
Again, $\step^\ell_{t,\theta}$ realizes $\ell$ ``nonlinearly activated'' explicit Euler steps of length $t>0$ 
for Equation~\eqref{eq:diffusion}. 
The larger $\ell$ is, the more the features will be pushed towards a configuration in which each is the weighted Fréchet mean of all its neighbors.
It directly follows that the $\ell$-step map aggregates information over $\ell$-hop neighborhoods. Therefore, a node's new feature is influenced by all nodes connected to it through a sequence of not more than $\ell$ edges. 

Now, we can define our diffusion layer. Let $\Omega$ be the set of graphs with vertex features.
				\begin{definition}[diffusion layer]\label{def:diffusion_layer}
				For $c \in \mathbb N$, let $\zb t = (t_1,\dots,t_c) \in \mathbb R_{\ge 0}^c$ and
				$\boldsymbol{\theta} = (\theta_1, \dots, \theta_c)\in (\mathbb R_{\ge 0}^2)^c$.
				A diffusion layer with $c \ge 1$ channels is a map $\diff_{\zb t,\boldsymbol{\theta}}: \Omega^c \to \Omega^c$ with
            \begin{align*} \label{eq:diffusion_layer}
               \diff_{\zb t,\boldsymbol{\theta}} &\big((G_i)_{i=1}^c \big):= \Big( \big(V_i,E_i,w_i, 
							\step^\ell_{t_i, \theta_i}(f_i) \big) \Big)_{i=1}^c.
            \end{align*}
        \end{definition}
        The diffusion layer takes $c$ graphs as input and diffuses them for certain amounts of time. Thus, it consists of $c$ diffusion ``channels''.
        The learnable parameters are the diffusion times $\zb t$ and the step activations $\boldsymbol{\theta}$.
        An essential property of the new layer is that it can be applied to graphs with varying numbers of nodes and connectivity patterns. 
        It is also clear that the information aggregation takes place in $\ell$-hop neighborhoods. 
        This local aggregation of information is considered one of the keys to the success of convolutional (graph) neural networks in many situations~\citep{bronstein2021geometric, kipf2017semisupervised}.

        Apart from the qualities already discussed, the diffusion layer possesses the following equivariance properties:
        \begin{theorem}
            The diffusion layer is equivariant under node permutations and isometric transformations of the feature manifold.
        \end{theorem}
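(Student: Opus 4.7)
The plan is to reduce the claim to equivariance of the $1$-step map $\step_{t,\alpha}$ and then verify each equivariance by chasing definitions through the compatibility identities already collected in Section~\ref{sec:background}. Since $\diff_{\zb t,\boldsymbol{\alpha}}$ acts channel-wise and both group actions also act channel-wise, it is enough to handle a single channel; and since $\step^\ell_{t,\alpha}$ is an $\ell$-fold self-composition of $\step_{t,\alpha}$, the composition is equivariant as soon as $\step_{t,\alpha}$ itself is.

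For isometries, let $\Phi\colon M\to M$ act on features by $f\mapsto \Phi\circ f$ (edges and weights are untouched, as they live in the base domain). I would proceed in three small steps. First, using \eqref{eq:log_isom} on each summand of \eqref{eq:graph_laplace} and pulling the linear map $\dd_{f(v)}\Phi$ outside the finite sum yields the Laplacian intertwining identity $\Delta(\Phi\circ f)(v) = \dd_{f(v)}\Phi\big(\Delta f(v)\big)$. Second, because $\Phi$ is an isometry we have $\|\dd_p\Phi(X)\|_{\Phi(p)} = \|X\|_p$, so the gating condition $\|\cdot\|\ge\alpha$ defining $\sigma^\alpha$ is preserved; together with $\dd_p\Phi(0)=0$ this gives $\sigma_{\Phi(p)}^\alpha\circ \dd_p\Phi = \dd_p\Phi\circ \sigma_p^\alpha$. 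Third, applying \eqref{eq:exp_isom} at $f(v)$ to the tangent vector $-t\,\sigma_{f(v)}^\alpha(\Delta f(v))$ and substituting the first two identities produces $\step_{t,\alpha}(\Phi\circ f)(v) = \Phi\big(\step_{t,\alpha}(f)(v)\big)$, which is the desired equivariance.

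For node permutations, a permutation $\pi$ of $V$ sends $G=(V,E,w,f)$ to $\bigl(V,\pi(E), w\circ(\pi^{-1}\times\pi^{-1}), f\circ\pi^{-1}\bigr)$. Unfolding $\Delta$ at the node $\pi(v)$, the sum runs over neighbors of $\pi(v)$ in the permuted graph, which are precisely the $\pi$-images of the neighbors of $v$ in $G$ and carry the same weights; after a change of summation variable the Laplacian at $\pi(v)$ in the new graph equals $\Delta f(v)$, read as an element of $T_{f(v)}M = T_{(f\circ\pi^{-1})(\pi(v))}M$. Applying $\sigma^\alpha$ and $\exp$ at this common base point preserves the identity, and permutation equivariance of $\step_{t,\alpha}$ (and hence of the whole layer, since $V$, $E$ and $w$ are transported consistently by $\pi$) follows.

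The only subtle point is the intertwining of the nonlinearity $\sigma^\alpha$ with $\dd\Phi$; everything else is essentially bookkeeping once the exp/log--isometry identities are invoked. This intertwining works precisely because $\sigma^\alpha$ depends on its argument only through its norm, and isometries preserve norms---a feature of the construction that is inherited from its ReLU heritage and that is exactly what makes the layer naturally isometry-equivariant. Had a more elaborate gating that broke the isometric symmetry of $T_pM$ been used, this step of the argument would fail.
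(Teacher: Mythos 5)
Your proof is correct and follows essentially the same route as the paper's: restrict to one channel, reduce to the $1$-step map, and push the isometry through $\log$, the linear sum, the norm-gated activation, and $\exp$ using \eqref{eq:log_isom}, linearity and norm-preservation of $\dd\Phi$, and \eqref{eq:exp_isom}, in that order. The only cosmetic difference is presentational: you factor the computation into two named intertwining identities (for $\Delta$ and for $\sigma^\alpha$) and spell out the permutation argument by explicitly transporting $E$, $w$, and the Laplacian sum along $\pi$, whereas the paper inlines the isometry computation and dispatches permutation equivariance in one sentence by appealing to the fact that $f$ and $\diff$ are defined on the unordered vertex set.
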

        \begin{proof}
           We can restrict ourselves to one channel. The equivariance of the diffusion layer under permutations of the vertex set is already inherent in our formulation as $f$ and $\diff$ are defined on the \textit{unordered} set $V$: If an order is imposed on the vertex set, the features are sorted accordingly.

            Next, we deal with the equivariance under isometries. Let $\Phi: M \to M$ be an isometry. We must show that transforming features with $\Phi$ commutes with applying the diffusion layer: 
            $$\step^\ell_{t, \theta}(\Phi \circ f) = \Phi \circ \step^\ell_{t, \theta}(f).$$
		    It suffices to show the relation for $\ell = 1$.
            By definition, we have, for all $v \in V$,
            \begin{equation*}
                \step_{t, \theta}(\Phi \circ f)(v) = \exp_{\Phi(f(v))}(-X_v)
            \end{equation*}
            with
            \begin{align*}
                X_v &:= t\sigma^{\theta}_{\Phi(f(v))} \Big(\!-\sum_{u \sim v} w(v,u) \log_{\Phi(f(v))}\Phi(f(u)) \Big).
			\end{align*}	
			Using \eqref{eq:log_isom}, the facts that $\dd_{f(v)}\Phi$ is linear and norm-preserving, and \eqref{eq:graph_laplace} gives 
            \begin{align*}
                X_v &= t\sigma^{\theta}_{\Phi(f(v))} \Big(\!-\sum_{u \sim v} w(v,u) (\dd_{f(v)} \Phi)(\log_{f(v)}f(u) )\Big) \\
                &= \dd_{f(v)} \Phi \bigg(t \sigma_{f(v)}^{\theta} \Big(\!-\sum_{u \sim v} w(v,u) \log_{f(v)}f(u) \Big) \bigg) \\
                &= \dd_{f(v)} \Phi \Big(t \sigma_{f(v)}^{\theta} \big(\!\Delta f(v) \big) \Big).
            \end{align*}
			Thus, applying \eqref{eq:exp_isom} and then \eqref{eq:inv_dphi} yields
			\begin{align*}
                \step_{t, \theta}(\Phi \circ f)(v) &= \exp_{\Phi(f(v))}(-X_v) \\
                &= \Phi \bigg(\!\exp_{f(v)} \Big(\dd_{\Phi(f(v))} \Phi^{-1} (-X_v) \Big) \bigg)\\
                &= \Phi \bigg(\!\exp_{f(v)} \Big(-t \sigma_{f(v)}^{\theta} \big(\!\Delta f(v) \big) \Big) \bigg) \\
								&= \Phi \circ \step_{t, \theta}(f) (v).
            \end{align*}
            This proves the claim.
        \end{proof}
        Our diffusion layer is equivariant under the symmetries of the graph and the feature space. As already discussed earlier, using only layers and networks that exhibit this behavior by default is a highly successful ansatz that helps to counter the curse of dimensionality~\citep{bronstein2021geometric}. Thus, the diffusion layer is a highly versatile building block for GNNs.

\section{Tangent Multilayer Perceptron} \label{sec:tLL}
        This section introduces a ``1x1 convolution'' layer that can transform node features and alter the network's width.
        To this end, we transfer ideas from the vector neuron framework~\citep{deng2021vector} to the manifold setting to construct a linear layer on tangent vectors with a subsequent nonlinearity. Due to the latter, the layer can be stacked to increase the depth. Indeed, it works like the well-known multilayer perceptron (MLP).
        
        The layer transforms $c_\textnormal{in}$ features $f_1, \dots, f_{c_\textnormal{in}} \in \Hsp$ from $c_\textnormal{in}$ channels into $c_\textnormal{out}$ output feature channels $g_1, \dots, g_{c_\textnormal{out}} \in \Hsp$. 
        To this end, the node features are mapped to a reference tangent space, and two sets of linear combinations are learned. One set of vectors is needed to define positive and negative half-spaces in the tangent space. The other---output---set is then (nonlinearly) transformed according to the vectors' half-spaces. 
        Finally, the exponential map is applied to transfer the result back to the manifold. 
	\begin{definition}\label{def:tangent_mlp} (Tangent MLP)
            For $f_1,\dots,f_{c_\textnormal{in}} \in \Hsp$ and each $v \in V$, let $\bar f(v) \in M$ be a point depending on $\zb f_v := \left(f_i(v) \right)_{i=1}^{c_\textnormal{in}}$ that is equivariant under node permutations and isometries of $M$. Further, assume that $f_1(v),\dots,f_{c_\textnormal{in}}(v) \in D_{\bar f (v)}M$. The \textit{tangent perceptron} with weights $\omega^i_j,\xi^i_j$, $i=1,\dots,c_\textnormal{in}$, $j=1,\dots,c_\textnormal{out}$ and nonlinear scalar activation $\sigma :\mathbb{R} \to \mathbb{R}$ transforms the features of each node into new features $g_1,\dots,g_{c_\textnormal{out}} \in \Hsp$ in three steps: 
		First, the directions
            \begin{align*}
                X_j(\zb f_v) &:= \sum_{i=1}^{c_\textnormal{in}} \omega_j^i\log_{\bar f(v)} \big(f_i(v) \big), \\
								\widetilde Y_j(\zb f_v) &:= \sum_{i=1}^{c_\textnormal{in}} \xi_j^i \log_{\bar f(v)} \big(f_i(v) \big),\\
								Y_j(\zb f_v) &:= \frac{\widetilde Y_j(\zb f_v)}{\|\widetilde Y_j(\zb f_v)\|_{\bar f(v)} } 
            \end{align*}
            are computed, and the $X_j$ are orthogonally decomposed with respect to the $Y_j$ as
            \begin{align*}
			X_j^{\tang}(\zb f_v) &:= \big\langle X_j(\zb f_v), Y_j(\zb f_v) \big\rangle_{\bar f(v)} Y_j(\zb f_v),\\
                X_j^{\perp}(\zb f_v) &:= X_j(\zb f_v)- X_j^{\tang}(\zb f_v). 
            \end{align*}
            Second, the nonlinear scalar function $\sigma$ is applied
            \begin{align*}
                Z_j(\zb f_v) 
								&:= 
								\frac{\sigma\big(\langle Y_j(\zb f_v), X_j^{\tang}(\zb f_v) \rangle_{\bar f(v)} \big)}{\langle Y_j(\zb f_v), X_j^{\tang}(\zb f_v) \rangle_{\bar f(v)}} X_j^{\tang}(\zb f_v) 
								+ X_j^{\perp}(\zb f_v),
						\end{align*}	
							and third, the result is transformed back to $M$ via
							\begin{align*}
                g_j(v; \zb f_v) &:= \exp_{\bar f(v)} \big(Z_j(\zb f_v) \big).
            \end{align*}
            The \textit{tangent multilayer perceptron (tMLP) with $m$ layers} is the concatenation of $m$ tangent perceptrons. 
        \end{definition}
Figure~\ref{fig:tMLP} depicts how the tMLP works when $\sigma \equiv \textnormal{ReLU}$. In this case, vectors are projected onto the positive half-space. The subscripts and arguments are omitted to avoid clutter. 

We give some further remarks on the layer.
        \begin{remark}
             \begin{itemize}
                \item[i)] Different choices are possible for the reference point $\bar f$. For example, we can select one of the channel features, that is, $\bar{f}(v):= f_i(v)$ for some $i \in \{1,\dots, c_{\textnormal{in}} \}$, or the Fréchet mean~\citep{Pennec2006} of some of the $\left( f_i(v)\right)_{i=1}^{c_{\textnormal{in}}}$; the result of a local or global pooling operation can also be used. Because it does not require additional computations, we select one of the channel features.
			    \item[ii)] Classical options for the nonlinear function $\sigma$ are the rectified linear unit (ReLU) or leaky ReLU. 
                \item[iii)] For a tMLP with $m > 1$, using the same reference point in all layers is convenient. With this choice, the computation of $g_j(v; \zb f_v)$ and the applications of the logarithm in the next layer (for $X_j(\zb f_v)$ and $Y_j(\zb f_v)$) cancel each other; hence, these operations can be left away for higher computational efficiency. 
                \item[iv)] The arguments from Remark~\ref{rem:cutLocus} concerning the cut locus of a non-Hadamard manifold also apply to the tMLP.
                \item[v)] Instead of the tMLP, we could also use sequences of 1D-convolutions from~\cite{chakraborty2020manifoldnet}. However, when stacked, these require nonlinear activations to guarantee non-collapsibility, and, as mentioned before, we are not aware of isometry-equivariant manifold-valued activation functions in the literature.
            \end{itemize}
        \end{remark}
        \begin{figure}
            \centering
            \includegraphics[width=0.8\linewidth]{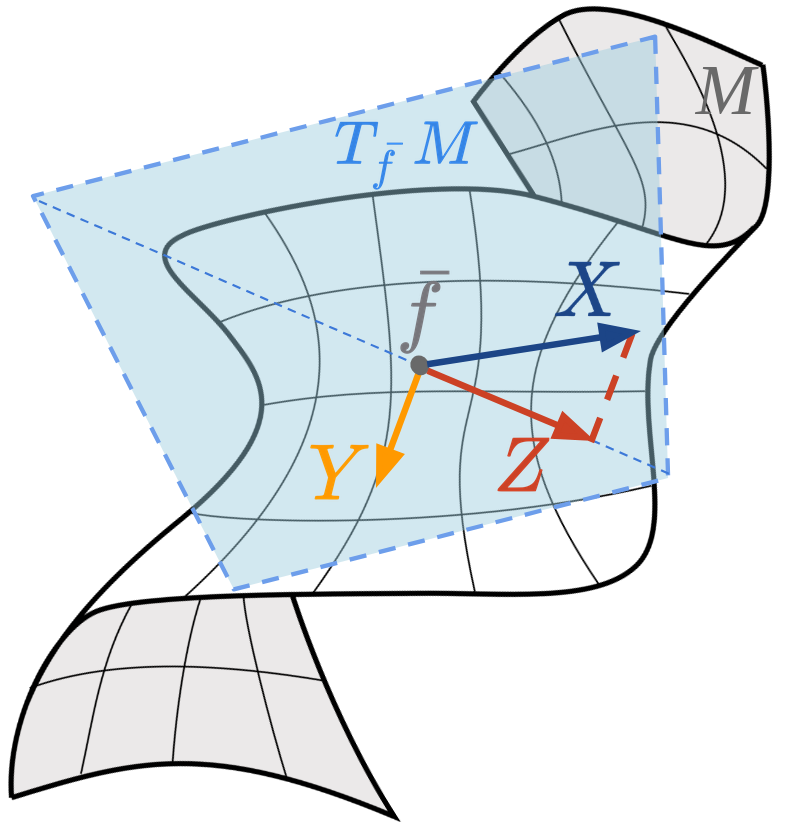}
            \caption{tMLP with ReLU activation for a fixed node and output channel}
            \label{fig:tMLP}
        \end{figure}
        
        The following theorem shows that the tMLP shares the equivariance properties of the diffusion layer.
        \begin{theorem}
            The tMLP is equivariant under node permutations and isometric transformations of the feature manifold.
            \end{theorem}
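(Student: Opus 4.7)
The plan is to reduce the theorem to a single tangent layer (by composition the same property extends to $m$ stacked layers) and to verify each of the three steps in the tMLP definition commutes with the two group actions separately.

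For node-permutation equivariance, I would simply note that all quantities in the definition—$\bar f(v)$, $X_j(\zb f_v)$, $\widetilde Y_j(\zb f_v)$, etc.—are computed pointwise in $v$ from $\zb f_v$. Thus if $\pi$ is a permutation of $V$, the layer applied to $(f_i \circ \pi)_i$ returns $(g_j \circ \pi)_j$, just as in the proof for the diffusion layer. This is essentially an observation, not a calculation.

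For isometry equivariance, fix an isometry $\Phi : M \to M$ and set $\tilde f_i := \Phi \circ f_i$. By the assumed equivariance of the reference-point map, $\bar{\tilde f}(v) = \Phi(\bar f(v))$. The heart of the argument is that the differential $\dd_{\bar f(v)}\Phi : T_{\bar f(v)}M \to T_{\Phi(\bar f(v))}M$ is a linear isometry, so it preserves linear combinations, norms, and inner products. Applying \eqref{eq:log_isom} to each $\log$-term, the first-stage directions transform as
\begin{align*}
X_j(\tilde{\zb f}_v) &= \dd_{\bar f(v)}\Phi\bigl(X_j(\zb f_v)\bigr), \\
\widetilde Y_j(\tilde{\zb f}_v) &= \dd_{\bar f(v)}\Phi\bigl(\widetilde Y_j(\zb f_v)\bigr),
\end{align*}
and norm preservation gives $Y_j(\tilde{\zb f}_v) = \dd_{\bar f(v)}\Phi(Y_j(\zb f_v))$. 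Inner-product preservation then yields $X_j^{\tang}(\tilde{\zb f}_v) = \dd_{\bar f(v)}\Phi(X_j^{\tang}(\zb f_v))$, and by linearity the same holds for $X_j^{\perp}$.

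In the second stage, the scalar $\|X_j^{\tang}(\zb f_v)\|_{\bar f(v)}$ is invariant under $\dd_{\bar f(v)}\Phi$, so $\sigma(\|\cdot\|)/\|\cdot\|$ is unchanged, and hence $Z_j(\tilde{\zb f}_v) = \dd_{\bar f(v)}\Phi(Z_j(\zb f_v))$. In the third stage, \eqref{eq:exp_isom} gives
\begin{align*}
g_j(v; \tilde{\zb f}_v)
&= \exp_{\Phi(\bar f(v))}\bigl(\dd_{\bar f(v)}\Phi(Z_j(\zb f_v))\bigr) \\
&= \Phi\bigl(\exp_{\bar f(v)}(Z_j(\zb f_v))\bigr) = \Phi(g_j(v;\zb f_v)),
\end{align*}
which is the required isometry equivariance of a single layer; an obvious induction on $m$ extends this to the full tMLP.

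I do not expect a genuine obstacle here: the argument is essentially bookkeeping that pushes $\dd\Phi$ through each algebraic operation. The only place that requires care is the normalization step defining $Y_j$ and the nonlinear rescaling in $Z_j$; both go through precisely because $\dd\Phi$ is a linear isometry, so norms (and hence the scalar factors $1/\|\widetilde Y_j\|$ and $\sigma(\|X_j^{\tang}\|)/\|X_j^{\tang}\|$) are unaffected. The assumed equivariance of the reference-point construction $\bar f$ is an explicit hypothesis of the definition and so may be used directly.
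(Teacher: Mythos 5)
Your proof is correct and follows essentially the same route as the paper: reduce to a single tangent layer, note that node-permutation equivariance is immediate since everything is computed pointwise with shared weights, and then push $\dd_{\bar f(v)}\Phi$ through each algebraic stage using linearity, norm/inner-product preservation, the equivariance of the reference point, and the relations \eqref{eq:log_isom}, \eqref{eq:exp_isom}. No gap.
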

        \begin{proof}
            We show the claim for a tangent perceptron. This directly yields the result for the general tMLP since the latter concatenates several of the former.
            
            The equivariance under permutations of the nodes holds because the weights $\omega^i_j, \xi^i_j$ have the same value for every node, and $\overline{f}$ is permuted just like the underlying vertices.
            
            Let $\Phi: M \to M$ be an isometry. 
						By~\eqref{eq:log_isom}, linearity of $\dd_{\bar f(v)} \Phi$, and since the reference points $\bar f$ transforms with the isometry, 
						we obtain with $\Phi \circ \zb f_v := \big( \Phi\left(f_i(v) \right) \big)_{i=1}^{c_\textnormal{in}}$ that
            \begin{align*}
                X_j (\Phi \circ \zb f_v) 
								&=\sum_{i=1}^{c_\textnormal{in}} \omega_j^i\log_{\Phi(\bar f(v))} \Big( \Phi \big(f_i(v) \big) \Big) \\
                &= \dd_{\bar f(v)} \Phi \left(\sum_{i=1}^{c_\textnormal{in}} \omega_j^i\log_{\bar f(v)} \big(f_i(v) \big) \right) \\
                &= \dd_{\bar f(v)} \Phi \big( X_j (\zb f_v)  \big).
            \end{align*}
            Analogously, since $\dd_{\bar f(v)} \Phi$ preserves norms, we obtain
            \begin{equation*}
                \widetilde Y_j (\Phi \circ \zb f_v)  
								= \dd_{\bar f(v)} \Phi \big( \widetilde Y_j  (\zb f_v) \big),
            \end{equation*}
						and further
						\begin{align*}
                 Y_j (\Phi \circ \zb f_v)  
								&= \frac{\dd_{\bar f(v)} \Phi \big( \widetilde Y_j  (\zb f_v) \big)}
								{\| \dd_{\bar f(v)} \Phi \big( \widetilde Y_j  (\zb f_v) \big)\|_{\Phi(\bar f(v)) }} \\
								&= \dd_{\bar f(v)} \Phi \big(Y_j (\zb f_v) \big).								
            \end{align*}
            Then, since isometries preserve angles as well, we get
						\begin{align*}
						X_j^{\tang}&(\Phi \circ \zb f_v) \\
														&= \big\langle X_j(\Phi \circ\zb f_v), 
								Y_j(\Phi \circ\zb f_v)
								\big\rangle_{\Phi(\bar f(v))} Y_j(\Phi \circ \zb f_v)\\
								&= \dd_{\bar f(v)} \Phi \big( X_j^{\tang}(\zb f_v) \big),
								\\
                X_j^{\perp}&(\Phi \circ \zb f_v) \\ 
                &=  \dd_{\bar f(v)} \Phi \big( X_j(\zb f_v)- X_j^{\tang}(\zb f_v) \big) 
								\\
								&= \dd_{\bar f(v)} \Phi \big( X_j^{\perp}(\zb f_v) \big).
													\end{align*}
            Thus, we conclude
                \begin{align*}
                Z_j &\big( \Phi \circ \zb f_v \big) \\
								&=
								\frac{\sigma \big( \langle Y_j(\Phi \circ \zb f_v), X_j^{\tang}(\Phi \circ \zb f_v) \rangle_{\Phi(\bar f(v))} \big)}
								{\langle Y_j(\Phi \circ \zb f_v), X_j^{\tang}(\Phi \circ \zb f_v) \rangle_{\Phi(\bar f(v))}} X_j^{\tang}(\Phi \circ \zb f_v) 
								\\
								& \quad + X_j^{\perp}(\Phi \circ \zb f_v)\\
								&= \dd_{\bar f(v)}\Phi \big(Z_j( \zb f_v ) \big).
            \end{align*}
            Finally, \eqref{eq:exp_isom} yields
            \begin{align*}
                g_j \big(v; \Phi \circ \zb f_v \big)  
								&= \exp_{\Phi(\bar f(v))} \Big(Z_j (\Phi \circ \zb f_v ) \Big)  \\
                &= \exp_{\Phi(\bar f(v))}  \Big(\dd_{\bar f(v)} \Phi \big( Z_j (\zb f_v \big) \Big) \\
                &= \Phi \Big( \exp_{\bar f(v)} \big( Z_j (\zb f_v) \big) \Big) \\
                &= \Phi \big( g_j (v; \zb f_v ) \big),
            \end{align*}
              which finishes the proof.
        \end{proof}
        Note that the tMLP is \textit{not} equivariant under permutations of the channels. Unlike the ordering of the nodes, the model learns the ordering of the channels, and we \textit{want} to give the network the freedom to extract information from it.
    \section{Manifold GCN} \label{sec:GCN}
 We now describe a generic graph convolutional neural network (GCN) block to which only the last task-specific layers need to be added for a complete architecture: 
        It consists of a sequence
        $$\diff \rightarrow \textnormal{tMLP} \rightarrow \dots \rightarrow \diff \rightarrow \textnormal{tMLP},$$
        in which the number of channels of each layer can be chosen freely, as long as the number of output channels of one layer equals the number of input channels of the next. When a graph is fed into the network, $c$ copies of it are given to the first diffusion layer, where $c$ is the number of channels it has. While the diffusion layers aggregate information over the graph, the tMLPs ensure that the architecture is not equivalent to a single diffusion layer (that is, it does not collapse). The tMLPs increase the nonlinearity and allow the network to transfer information between channels.

        \begin{figure*}
            \centering
            \includegraphics[width=\linewidth]{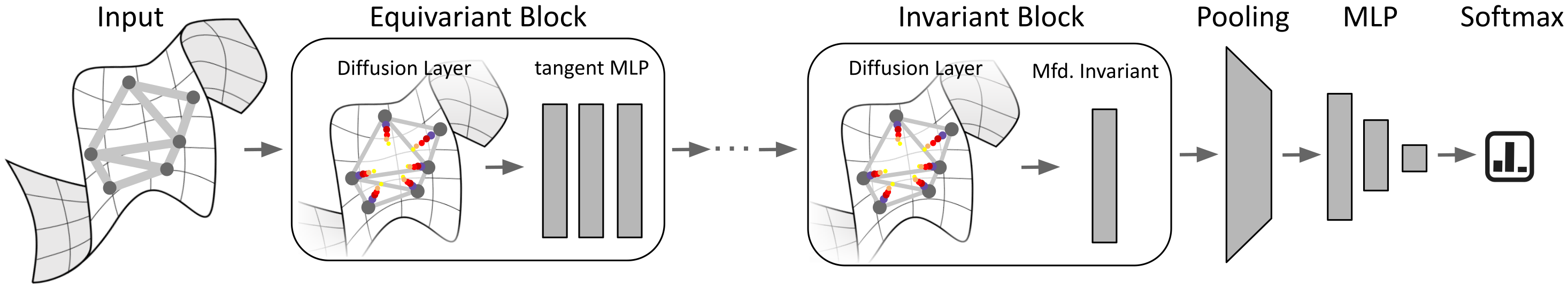}
            \caption{Manifold GCN pipeline for graph classification}
            \label{fig:architecture}
        \end{figure*}

        Later, when we speak of the ``depth'' of a manifold GCN block, the number of diffusion layers is meant. When all layers have the same number of channels $c$, we say that the block has ``width'' $c$. 

        When learning scalar outputs (as, for example, in classification), it is usually beneficial to add layers after the manifold GCN block that make the network invariant under node permutations and isometries. We found that the manifold invariant layer (Mfd. invariant) from~\citep[Sec.\ 2.3]{chakraborty2020manifoldnet} works well; we apply it node-wise and add graph-wise (max- and/or mean-)pooling afterward.
        While the invariant layer makes the network invariant under isometries~\citep[Prop.\ 3]{chakraborty2020manifoldnet}, the global pooling operation ensures invariance under node permutations.
        An MLP with a trailing softmax function can finally be added to map, for example, to class probabilities for classification. Figure~\ref{fig:architecture} shows the proposed classification pipeline.

        We found that normalizing the graph weights can boost the performance of GNNs that use a manifold GCN block. The reason, in all likelihood, is that if the step size is too large, the $\ell$-step map will be far from the continuous flow so that the evolution of the graphs becomes ``chaotic''. Before feeding a graph $G = (V, E, w, f)$ with $b:= \max_{v \in V} \sum_{u \sim v} w(v, u) > 1$ into the network, we thus recommend a global scaling of the weights: 
        $$\widetilde{w} := \frac{w}{b}.$$
        We have observed the best performance with ``short'' initial diffusion times (usually distributed tightly around some $t_0 \in [0, 1]$).
        Appendix~\ref{app:l_step}, which discusses how $\diff$ behaves in specific scenarios, further explains why normalization is helpful.

        \textbf{Complexity analysis:} Considering the number of steps $\ell$ a fixed parameter, evaluating Equation~\eqref{l-step} has complexity $\mathcal{O}(|E|)$. Since the node-wise operations of a tMLP are of class $\mathcal{O}(|V|)$, a manifold GCN block also has complexity $\mathcal{O}(|E|)$, like the standard message-passing framework.
        
        The costs include constants that depend on the complexity of the Riemannian operations (most importantly, the exponential and logarithm maps). For some applications (compare Section~\ref{exp:synthetic}), it is interesting how their complexity depends on the manifold's dimension: This differs from manifold to manifold; for example, in hyperbolic and spherical spaces, there is only a linear dependency, whereas they scale cubically for the SPD space.


\section{Experiments} \label{sec:numerics}
    The results of our experiments are in this section. We used our manifold GCN in two graph classification tasks.

    \subsection{Synthetic Graphs} \label{exp:synthetic}
        A benchmark~\citep{liu2019hyperbolic} for graph classification algorithms is to let them learn whether a graph was created using the Erdős-Renyi~\citep{erdHos1960evolution}, Barabasi-Albert~\citep{barabasi1999emergence}, or Watts-Strogatz~\citep{watts1998collective} algorithm. 
        Each algorithm creates graphs with different characteristics: Erdős-Rényi graphs are purely random with low clustering and no hubs, Barabási–Albert graphs grow with preferential attachment and naturally form hubs with a power-law degree distribution, while Watts–Strogatz graphs create highly clustered small-world networks by rewiring a regular lattice.
        Examples of graphs created with each of them are shown in Figure~\ref{fig:graphs}.
        
        \cite{liu2019hyperbolic} and \cite{dai2021hyperbolic} showed that classifying the algorithm from a learned embedding in hyperbolic space is superior to classifying from a Euclidean one.
        We used our manifold GCN for this task. 
        Since our network can work on arbitrary manifolds, we also compare the hyperbolic space to the SPD space as an embedding space. The latter is thought to be advantageous because it has a more complex geometric structure~\citep{zhao2023modeling}. 

        \begin{figure*}[!ht]
            \centering
            \includegraphics{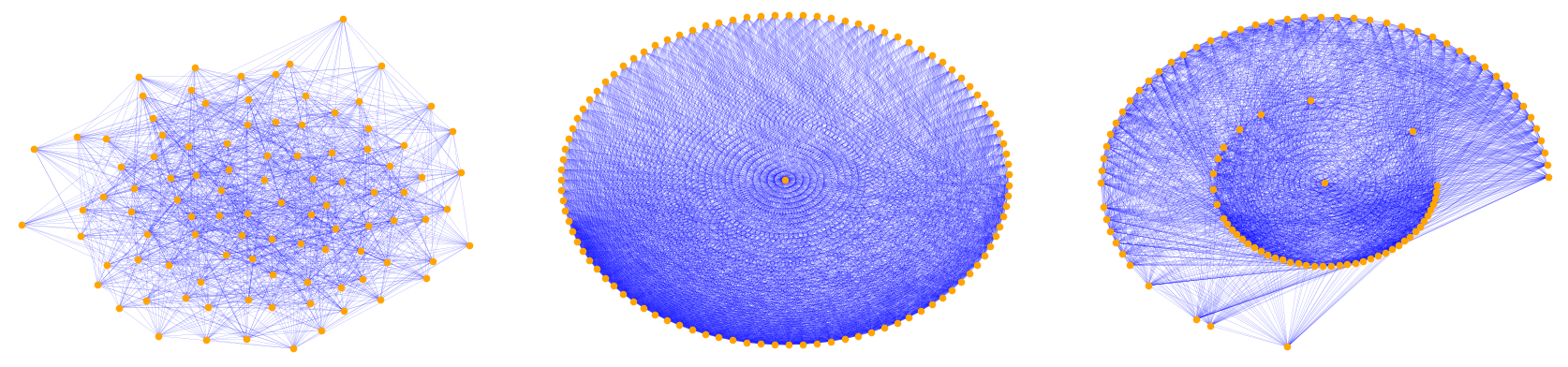}
            \caption{Graphs created with the Erdős-Renyi (left), Barabasi-Albert (middle), and Watts-Strogatz (right) algorithms}
            \label{fig:graphs}
        \end{figure*}

        \subsubsection{Data}
            We used differently-sized data sets of graphs built with the three creation algorithms. Each set was balanced; that is, it contained the same number of graphs for each algorithm. 
            Every graph had 100 nodes. The number of edges was chosen (as by~\cite{liu2019hyperbolic}) in the following way: For Barabási-Albert graphs, the number of edges to attach from a new node to existing nodes was discrete-uniformly distributed between 1 and 200. For Erdős-Rényi, the probability of edge creation was chosen from a uniform distribution on $[0.1,1]$. 
            For Watts-Strogatz, each node was connected to a discrete-uniform distributed random number between 1 and 200 neighbors in the ring topology, and the probability of rewiring each edge was taken from a uniform distribution on $[0.1,1]$. 
            All edge weights were set to 1/100.
            
        \subsubsection{Architecture and Loss}
            We used a manifold GCN block of depth 2. The initial flow layer had 5 channels; the following tMLP, which consisted of only one layer and used leaky ReLU as nonlinearity, increased this number to 16. Another flow layer followed, and we added the invariant layer from~\cite{chakraborty2020manifoldnet} to map to scalar features. The invariant layer calculated the distances to two weighted means for each node, outputting scalar features in 16 channels. Graph-wise max- and mean-pooling were then used for aggregation; a 2-layered MLP with leaky ReLU activations, and a final softmax function mapped to the class probabilities.
            The network was trained with the standard cross-entropy loss.

        \subsubsection{Embedding Manifolds} 
            In our experiments, we embedded the graphs in hyperbolic and SPD spaces.
            As the model of hyperbolic space, we used the Lorentz model (see, for example,~\cite{liu2019hyperbolic} for formulas), as it performed best on the given task~\citep[Table 1]{liu2019hyperbolic}. When we used the SPD space, we endowed the latter with the affine-invariant metric~\citep{Pennec2006}.
            
        \subsubsection{Initial Encoding}
            For the initial embedding of a graph into hyperbolic space, we used the following options: 
            \begin{itemize}
                \item[(i)] The ``degree'' embedding from~\cite{liu2019hyperbolic}: We assign to each node a one-hot vector $X$ of length 101, the index of the one-entry being the degree of the node. Since the last entry of $X$ is 0, this vector is a tangent vector at the origin $o = [0\ \cdots\ 0\ 1]^T$ of the 100-dimensional Lorentz-hyperboloid $\textnormal{H}^{100}$. 
                We can then use a (learned) linear function that maps to the tangent space at the origin $o_d$ of a $d$-dimensional Lorentz space. 
                The final step is then to apply the exponential $\exp_{o_d}$ of the Lorentz model to obtain a feature $\exp_{o_d}(X) \in \textnormal{H}^d$ for the node.
                \item[(ii)] The ``one-hot'' embedding: We choose an order for the graph's nodes and assign to each node the one-hot vector $X \in T_o\textnormal{H}^{100}$ of length 101 that indicates its place. The final feature is $\exp_o(X)$.
            \end{itemize}
            While the first embedding is equivariant under node permutations because of the use of the degree, the one-hot embedding is \textit{not}. Therefore, networks such as~\cite{liu2019hyperbolic} and \cite{dai2021hyperbolic} cannot use it without sacrificing invariance under permutations. For our manifold GCN, this is different since two one-hot embeddings can be transformed into each other by a series of rotations (about the hyperboloid's symmetry axis) and reflections (about hyperplanes defined by the coordinate axes). Since the latter are isometries of the Lorentz model, the equivariance under isometries of the proposed layers secures the permutation-invariance of the network, even with the one-hot embedding.

            We also used a ``one-hot'' embedding similar to the hyperbolic space for the SPD space. There, the tangent space at the identity matrix consists of all symmetric matrices of the same size. We embedded the graphs in the $15$-by-$15$ SPD matrices, as this is the smallest space whose number of free off-diagonal entries is not smaller than the number of nodes. Each node was assigned a different symmetric ``one-hot'' matrix with two one-entries off the diagonal and zero everywhere else. This leads again to a permutation-invariant manifold GCN since we can always transform a one-hot matrix into another via a congruence transformation with the product of two elementary permutation matrices. This represents an isometry since the affine-invariant metric is invariant under congruence with orthogonal matrices~\citep{thanwerdas2023n}.

            \begin{table*}[!ht]
                \centering
                \caption{Classification of synthetic graphs in Hyperbolic space}
                \label{tab1}
                \begin{adjustbox}{width=.9\textwidth}
                \begin{tabular}{ c | c c c c c | c}
                \textbf{Method} & \multicolumn{5}{c|}{\textbf{Mean F1 Score}} & \textbf{\# param.} \\
                \# Graphs & 90 & 180 & 360 & 1080 & 2880 & \\
                \hline
                HGNN & $41.8 \pm 10.3$ & $41.3 \pm 10.5$ & $40.4 \pm 9.7$ & $54.7 \pm 15.4$ & $76.6 \pm 7.9$ & 161903 \\
                H2H-GCN & $42.8 \pm 11.4$ & $54.0 \pm ~9.2$ & $66.2 \pm 6.1$ & $77.6 \pm ~3.7$ & $84.7 \pm 2.5$ & 31348 \\
                Ours (degree) & $55.2 \pm 12.0$ & $63.9 \pm ~8.2$ & $67.2 \pm 5.9$ & $71.3 \pm ~4.7$ & $73.3 \pm 4.0$ & 11970 \\
                Ours (one-hot) & $\textbf{67.8} \pm 12.2$ & $\textbf{73.7} \pm ~8.1$ & $\textbf{76.7} \pm 6.0$ & $\textbf{79.2} \pm ~3.5$ & $\textbf{85.2} \pm 2.0$ & \textbf{1970} \\
                \end{tabular}
                \end{adjustbox}
             \end{table*}

        \subsubsection{Evaluation and Comparison Methods}
            As comparison methods, we used the \textit{hyperbolic graph neural network (HGNN)} from~\citep{liu2019hyperbolic} and the \textit{hyperbolic-to-hyperbolic graph convolutional neural network (H2H-GCN)} from~\citep{dai2021hyperbolic}, which achieved state-of-the-art results for classifying graph construction algorithms.
            
            We tested HGNN and H2H-GCN, embedding the graphs in the Lorentz model of 100-dimensional hyperbolic space.
            We compared them to two versions of our network: one applying the degree and the other the one-hot embedding, both in a 100-dimensional hyperbolic space.
            
            To investigate how well the networks learn with increasing training data, we tested them on data sets with 90, 180, 360, 1800, and 2880 graphs.
            We always split the data set into training, validation, and test sets using a 4:1:1 ratio and used the (macro) F1 score to measure classification accuracy. After each epoch, the accuracy on the validation set was computed. Our final model was chosen as the last one with the highest validation score, and its score on the test set is reported.
            For all but the set with 2880 graphs, we repeated this process 100 times, each time with a new (random) data set; because the standard deviation of the results went down (and the computation times up), we only performed 25 repetitions for the largest set.

            To check the performance on large data sets, we created a single data set of 6000 graphs and trained all methods on two splits; we only employed the better-working one-hot variant of our network here. In addition, we also trained our network using the SPD(15) space to compare embedding manifolds.

            Finally, to assess the sensitivity of our network to the choice of its hyperparameters, we trained on 2880 graphs with a varying depth of the manifold GCN block and a varying number of channels.
            
        \subsubsection{Software}
            All our experiments were performed in Python 3.11. For computations in the hyperbolic and SPD spaces, we used \texttt{Morphomatics 4.0}~\citep{Morphomatics}. The graphs were created with \texttt{NetworkX 3.4.2}~\citep{SciPyProceedings_11} and \texttt{Jraph 0.0.6.dev0}. 
            For experiments with HGNN, we used the code that the authors offer online\footnote{\href{https://github.com/facebookresearch/hgnn}{https://github.com/facebookresearch/hgnn}}. Since we could not find the code of H2H-GCN online, we implemented it ourselves in \texttt{Flax 0.9.0} using \texttt{JAX 4.33}. The training was done on a GPU using the ADAM implementation of \texttt{Optax 0.2.3}. The parameters were updated using an incremental update function with step size $0.1$.

        \subsubsection{Parameter Settings}
            We used a learning rate of $10^{-3}$ and trained with balanced batches of size 3 for 60 epochs based on our empirical observations. Since the hyperparameters of H2H-GCN were not made public, we chose 4 layers and 15 centroids for this network based on a grid search. (Refer to~\citep{dai2021hyperbolic} for a description of the hyperparameters.) All computations were performed with double precision.

        \subsubsection{Results and Discussion}
            
            The average F1 scores, standard deviations, and the number of trainable parameters for each network are presented in Table~\ref{tab1}. Manifold GCN with the one-hot embedding performs (often clearly) better than its competitors for each sample size. This is impressive, especially since it is also the method with the fewest trainable parameters. 
            (The significant difference in the number of parameters for our methods is due to the linear layer in the node embedding.) The results indicate that the more substantial separation of the embedded nodes helps the network learn. 
            The reliance of the HGNN and H2H-GCN models on the degree embedding for permutation-invariant classification likely impacts their performance as well.
            
            We observe further that our method performs better with less training data: The other networks' performances only start to come closer to ours when there are 1080 graphs in the training set. We think the better performance on small data sets comes from the inductive bias we introduce through our isometry-invariant network.

            On the large data sets consisting of 6000 graphs, we obtained the following mean classification accuracies:\footnote{The difference between our results and those of~\cite{liu2019hyperbolic} is not surprising since graphs with up to 500 nodes were used there; our results show that the task is harder when there are fewer nodes.}
            \begin{itemize}
                \item Ours in $\textnormal{H}^{100}$: $86.1\%$,
                \item Ours in SPD(15): $85.0\%$,
                \item HGNN: $85.6\%$, 
                \item H2H-GCN: $80.9\%$.
            \end{itemize}
            These results confirm our findings. They further indicate that, for this task, the SPD space is not superior to the hyperbolic space as an embedding space. 

            Our network's results on 2880 graphs with varying depth and width are shown in Table~\ref{tabvar}; they indicate that the network's performance does not strongly depend on a particular choice, as the performance is quite stable.

            Finally, we also compared the runtime of a forward pass to add practical insight to the discussion on complexity in Sec.~\ref{sec:GCN}. Using an NVIDIA GeForce RTX 3080, we obtained the following results: 
            \begin{itemize}
                \item Ours in $\textnormal{H}^{100}$: $0.026$ s,
                \item HGNN: $0.006$ s,
                \item H2H-GCN: $0.050$ s.
            \end{itemize}
            The runtime differences from our model are relatively small ($<5\times$) given that the architectures were selected based on performance rather than capacity.

\begin{table}[ht]
        \centering
        \caption{Results on 2880 graphs with varying depth $d$ and number of channels $c$ of the manifold GCN block}
        \label{tabvar}
        \begin{tabular}{ c | c  c  c}
           & \multicolumn{3}{c}{\textbf{$c$}} \\
                   \textbf{$d$}  & 12 & 16 & 24 \\
            \hline
            1 & $83.6 \pm 1.9$ & $84.2 \pm 2.3$ & $84.9 \pm 2.5$ \\
            2 & $83.6 \pm 2.6$ & $\textbf{85.2} \pm 2.0$ & $84.3 \pm 2.1$ \\
            3 & $82.6 \pm 2.6$ & $82.7 \pm 3.7$ & $82.6 \pm 3.8$ \\
        \end{tabular}
\end{table}

    \subsection{Alzheimer's disease}

        \begin{figure*}[!ht]
            \centering
            \includegraphics[width=0.9\textwidth]{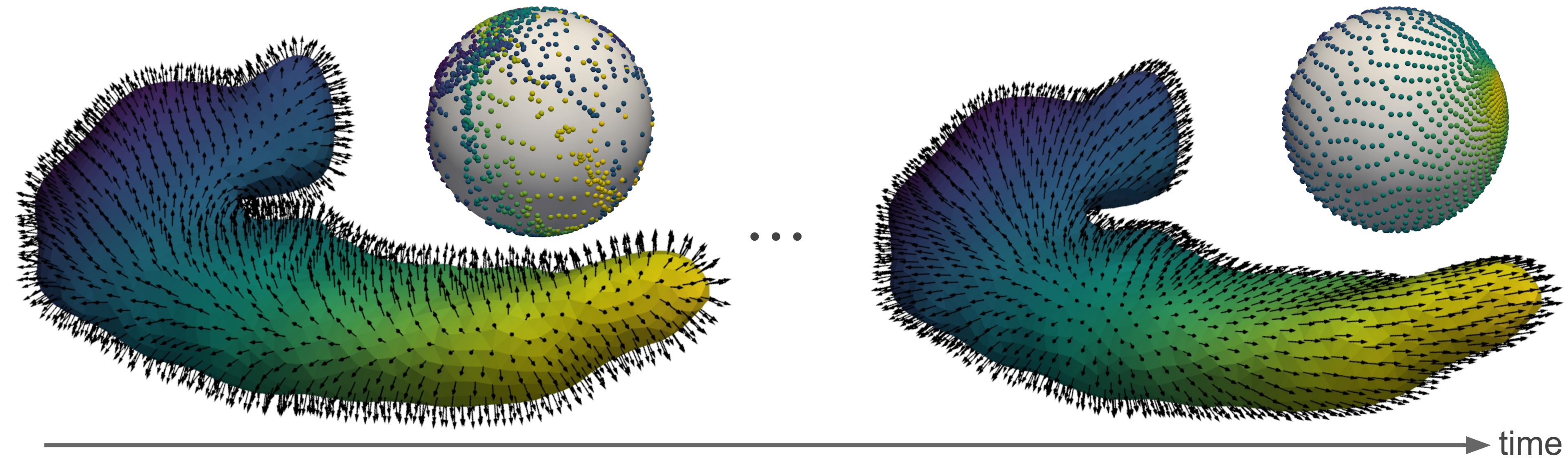}
            \caption{Diffusion of a hippocampus mesh with vertex normals; the same normals, colored like their footpoints, are also shown as points on $\textnormal{S}^2$}
            \label{fig:adni}
        \end{figure*}
        
        Alzheimer's disease (AD) is a neurodegenerative disorder that is diagnosed more and more often worldwide. 
        Studies have shown that AD is characterized by a pattern of brain atrophy, particularly of the hippocampus~\citep {Mueller_ea2010}. 
        
        Given triangle meshes of right hippocampi, we applied two manifold GCN architectures to classify AD from the 1-skeletons\footnote{1-skeletons consist of vertices and edges and represent graphs.} of the surface meshes and their dual graphs. 
        As features, we used gross volumes and two types of manifold-valued shape features.
        
        
        First, we used the primary graph and assigned surface normals as vertex features. The extrinsic curvature they encode provides a concise summary of the local shape of the hippocampus, adding important information about the condition of the organ.
        Since normals have unit norm, the feature manifold $M$ for this choice is the standard 2-sphere $\textnormal{S}^2$. 
        
        Second, we employed the representation by differential coordinates~\citep{vonTycowicz_ea2018}. These encode hippocampal shapes as deformations of a reference (mean) hippocampus. Applied to triangle meshes, the representation assigns a pair of matrices from $M = \textnormal{SO}(3) \times \textnormal{SPD}(3)$ to each triangle of the mesh; it describes the rotation and stretch, respectively, that the triangle undergoes during the deformation. Using the dual graph, whose vertices correspond to the mesh's triangles, we assigned to each the corresponding pair of matrices. The product manifold $M$ is a relatively complex manifold: It has positive and negative sectional curvature because $\textnormal{SO}(3)$ (with its bi-invariant metric) provides the former~\citep{MILNOR1976293} and SPD(3) (with the affine-invariant metric) the latter.
        
        With both choices, the classification does not depend on the orientation of the hippocampi (meshes) in space. 
        We can thus expect that equivariance under rotational symmetries is beneficial for this task.

        \subsubsection{Data}
            For our experiment, we prepared a data set consisting of 60 subjects diagnosed with AD and 60 cognitively normal (CN) controls based on data from the open-access Alzheimer's Disease Neuroimaging Initiative (ADNI) database.
            Among others, ADNI provides 1632 magnetic resonance images of brains with segmented hippocampi, and triangle meshes thereof were derived by~\cite{AmbellanZachowTycowicz2021}.
            Each mesh was interpreted as a (primal) graph with edge weights being those of the mesh's cotangent Laplacian; constant (normalized) weights were assigned to its dual graph.
            Features for the primal graphs were the vertex normals---the average normal of the faces to which the vertex belonged---and the differential coordinates for the dual graph.
            
            Due to the equivariance of the representations, our network does not need an alignment of the hippocampi. Nevertheless, the imaging protocol assured a coarse alignment in the coordinate system of the MRI scanner. A visualization of the diffusion towards the stable state of a hippocampus mesh with its normals and their location on the sphere is shown in Figure~\ref{fig:adni}.
            
        \subsubsection{Architecture and Loss}
            We trained a manifold GCN on both representations with slightly different setups: Manifold GCN blocks with a depth of 4 and 16 channels in each layer were used in both cases, but for the primal graphs with normals as features there was only one such block, whereas for the dual graphs with differential coordinates as features we used two independently (one for the rotations and one for the SPD matrices). Each tMLP consisted of one layer with a leaky ReLU activation. The last layers were the same as in the experiment with synthetic graphs (an invariant layer, a pooling layer, and a final MLP); the only differences were that each had 16 channels, and the hippocampus volume was appended to the output of the pooling layer.
            
            The networks were also trained using the cross-entropy loss.

        \subsubsection{Evaluation and Comparison Methods}
            We used three different comparison methods. 
            The first one---DiffusionNet~\citep{sharp2022diffusionnet}---is a state-of-the-art deep neural network for learning on surfaces. It takes the original meshes together with vertex features as input and combines diffusion on the surface and spatial gradients to predict the class. Sharp et al. suggest using the 3D position or the heat kernel signature of each vertex as features, which we both employed; in addition, we used the inputs to our method as features. To this end, the differential coordinates were averaged over all faces adjacent to the vertex.
            
            The second comparison method, Mesh CNN~\citep{hanocka2019meshcnn}, is another well-known deep neural network for learning from polygonal meshes. It is based on specialized convolution and pooling layers tailored to meshes.
            
            As the third comparison method, we built an Euclidean GNN that closely resembled our proposed network. The manifold GCN block was mimicked using the convolutions from~\cite{kipf2017semisupervised} in combination with linear node-wise layers instead of the diffusion layers with depth-one tMLPs.
            
            As the accuracy measure, we used the ratio of correctly classified subjects. We always split the data set into training, validation, and test sets using a 3:1:1 ratio. After each epoch, the accuracy on the validation set was computed. For each method, the final model was chosen as the first with the highest validation score, and its score on the test set is reported. We repeated this process 100 times for every method, each time with a new (random) data split.

            We also conducted an ablation study to assess the influence of individual parts of the manifold GCN by replacing all tMLPs with the manifold fully connected layer (FCL) from~\cite{chakraborty2020manifoldnet} (\textit{missing tMLPs}), omitting the diffusion layers (\textit{missing diffusion layers}), and omitting the tMLPs (\textit{missing node-wise MLPs}). We also tested the influence of invariance under isometries of the sphere by breaking the equivariance (\textit{missing equivariance}) of the tMLPs. After mapping to the tangent space in the tMLP, each three-vector was therefore viewed as a collection of scalars, and an ordinary MLP with shared weights was applied.

        \subsubsection{Software}
            We used the same software as in the experiment with the synthetic graphs for our network and its training. Additionally, \texttt{PyVista 0.44.1} was used to compute the vertex normals and the volumes of the meshes.
            For DiffusionNet\footnote{\url{https://github.com/nmwsharp/diffusion-net}} and MeshCNN\footnote{\url{https://ranahanocka.github.io/MeshCNN/}}, we used the code that is available online. The Euclidean GNN was built with the graph convolutions provided by \texttt{Jraph}.
        
        \subsubsection{Parameter Settings}
            Based on our empirical observations, we used a learning rate of $10^{-3}$ with exponential decay. We trained with a batch size of 1 for 300 epochs; only our model with differential coordinates as features was trained for 150 epochs due to the observed convergence speed. All computations were performed with double precision.

        \subsubsection{Results and Discussion}

            \begin{table*} [ht]
                \centering
                    \caption{Shape classification results}
                    \label{tab2}
                    \begin{tabular}{ c | c | c | c | c}
                        \textbf{Method} & \textbf{Features} & \textbf{Feature Manifold} & \textbf{\diameter Accuracy} & \textbf{\# param.}\\
                        \hline
                        GCN & Normals & $\mathbb{R}^3$ & $70.9 \pm 10.1$ & 6186 \\
                        Mesh CNN & Geometry & $\mathbb{R}^3$ & $59.2 \pm ~7.3$ & 1320558 \\
                        DiffusionNet & Geometry & $\mathbb{R}^3$ & $61.6 \pm 10.4$ & 116098 \\
                        DiffusionNet & Heat Kernel Signature & $\mathbb{R}^{16}$ & $74.5 \pm ~8.8$ & 116930 \\
                        DiffusionNet & Normals & $\mathbb{R}^3$ & $64.7 \pm ~9.6$ & 116098 \\
                        DiffusionNet & Differential Coords. & $\mathbb{R}^{18}$ & $67.2 \pm ~7.8$ & 117058 \\
                        Ours & Normals & $\textnormal{S}^2$ &  $\underline{75.3} \pm 10.7$ & \textbf{2618} \\
                        Ours & Differential Coords. & $\textnormal{SO}(3)\!\times\!\textnormal{SPD}(3)$ & $\textbf{76.9} \pm ~7.6$ & \underline{6106}
                    \end{tabular}
            \end{table*}

            The results (mean accuracies over 100 splits) of our experiment, along with the number of trainable parameters for each network, are shown in Table~\ref{tab2}. The best-performing method is in bold; the second-best is underlined.
            Remarkably, while our proposed models feature the lowest capacities as gauged by the number of parameters, they achieve the highest accuracies among all tested networks. 
            This suggests that the proposed networks exhibit a better expressivity-capacity ratio than the other methods.

            Our model achieved the highest accuracy when receiving differential coordinates as input. This suggests that there is more information in the differential coordinates representation, and that manifold GCN can leverage it. Nevertheless, our network also achieved a strong result on the more compact representation by normals, thereby utilizing the smallest number of parameters amongst all methods. 
            
            Regarding the other methods, those that take surface embeddings as input---namely, DiffusionNet and Mesh CNN---show the poorest performance. In contrast, using the heat kernel signature as an input feature provides a significant boost in performance, nearly equaling our results. The heat kernel signature captures the \textit{intrinsic} geometry of the surface on various scales, while the normals encode \textit{extrinsic} local curvature information. We thus find that both types of geometric information contain important information. DiffusionNet's results on the normals and differential coordinates prove that our improvements are not due to a different input choice.
            
            The difference between our results and those of the Euclidean GCN shows that being aware of the underlying spherical geometry is vital for the correct identification of discriminating patterns when utilizing the normal field.
            The same applies when using differential coordinates as features, as shown by our results and those of DiffusionNet.

            The results of the ablation study are shown in Table~\ref{tababl}. They verify that all the tested elements contribute to the network performance. While omitting parts of the manifold GCN or breaking the equivariance leads to a large drop in performance, our tMLP compares favourably to the manifold FCL.

\begin{table}[h]
    \centering
        \caption{Ablation Study}
        \label{tababl}
        \begin{tabular}{c | c}
        \textbf{Missing} & \textbf{\diameter Accuracy} \\
        \hline
          tMLPs & $74.5 \pm ~8.1$ \\
          diffusion layers & $70.5 \pm ~9.2$ \\
          node-wise MLPs & $70.5 \pm ~9.3$ \\
          equivariance & $70.8 \pm ~9.1$\\
          None & $\textbf{75.3} \pm 10.7$
        \end{tabular}
\end{table}

\section{Conclusion} \label{sec:conclusions}
    We have presented two new GNN layers for manifold-valued features. They are equivariant under the symmetries of the domain and the feature space of the graph and can be combined to form a highly versatile GNN block that can be used for many deep-learning tasks.
    Unlike existing GNN methods, our layers can handle data from various manifolds. This opens up possibilities for deep learning applications on manifold-valued data that could not be tackled with GNNs.
    
    We applied networks based on our new layers to two graph classification tasks, observing excellent performances: While more widely applicable, they outperformed task-specific state-of-the-art models. 
    The equivariance under symmetries of the feature space enabled a novel graph embedding strategy that exploited the relation between permutations of graph nodes and isometries in hyperbolic and SPD space. The new strategy significantly increased performance over degree-based embeddings, demonstrating a huge potential for geometric representation learning.
    
    Tests with scarce training data suggest that our architectures need less training data, probably because of the additional inductive bias due to the equivariance properties. This might prove helpful in applications where training data is difficult to obtain. An area where this is usually the case is medical shape analysis. Here, we demonstrated a way to utilize sphere-valued features as compact descriptors of shapes to tackle small-sample-size learning tasks in this area. 

    Intrinsic architectures that encode the regularities of the data are particularly
    promising for applications with a limited amount of available training data, which is why we focused on them in this work. Clearly, however, evaluating the performance of the proposed network units for different
    tasks, including large-scale benchmarks, is a highly interesting direction
    that we plan to explore in future work. Possible applications of our GNN layers arise, for example, in physics-informed dynamic systems modeling, where it seems interesting to go beyond constant curvature spaces~\citep{sun2025pioneer}; molecule modeling and analysis, where data in the torus and SE(3) appears~\citep{10.5555/3618408.3620080,10.5555/3692070.3693298} and adding our layers might successfully fine-tune current models; and shape analysis~\citep{vonTycowicz_ea2018,Pennec_ea2019_book,AmbellanZachowTycowicz2021}, where the direct application of deep learning to the manifold-valued shapes is underexplored.

    Problems arising with GNNs are over-squashing and bottleneck phenomena~\citep{alon2021on}. They lead to situations in which the transfer of information between distant nodes is tough. A remedy presented by~\cite{chamberlain2021beltrami} is to build layers from diffusion using implicit instead of explicit numerical schemes. This helps to overcome, for example, bottlenecks, as global information is used for an update. This idea can be transferred to our case: Future research can focus on making a diffusion layer based on an implicit (Euler) discretization of the diffusion equation. It also seems interesting to test the use of $p$-Laplacians~\citep{bergmann2018graph} with $p \neq 2$.

    Finally, adding an attention mechanism to the novel layers seems a promising avenue for future work. The attention mechanism has contributed to better performance in various applications, and we anticipate similar benefits for signals that take values on manifolds.

\backmatter

\section*{Declarations}


\begin{description}
\item[\bf Conflict of interest] The authors declare they have no competing interests.
\item[\bf Funding declaration] Martin Hanik was funded by the Deutsche Forschungsgemeinschaft (DFG, German Research Foundation) under Germany's Excellence Strategy---The Berlin Mathematics Research Center MATH+ (EXC-2046/1, EXC-2046/2, project ID: 390685689).
\item[\bf Ethical approval] Not applicable
\item[\bf Consent to participate]
Not applicable
\item[\bf Consent for publication] Not applicable
\item[\bf Data availability] 
This work relies on data from the open-access  Alzheimer's Disease Neuroimaging Initiative (ADNI)\footnotemark. It can be downloaded from \url{http://adni.loni.usc.edu/}.
\footnotetext{Data collection and sharing for this project was funded by the ADNI (National Institutes of Health Grant U01 AG024904) and DOD ADNI (Department of Defense award number W81XWH-12-2-0012). ADNI is funded by the National Institute on Aging, the National Institute of Biomedical Imaging and Bioengineering, and through generous contributions from the following: AbbVie, Alzheimer's Association; Alzheimer's Drug Discovery Foundation; Araclon Biotech; BioClinica, Inc.; Biogen; Bristol-Myers Squibb Company; CereSpir, Inc.; Cogstate; Eisai Inc.; Elan Pharmaceuticals, Inc.; Eli Lilly and Company; EuroImmun; F. Hoffmann-La Roche Ltd and its affiliated company Genentech, Inc.; Fujirebio; GE Healthcare; IXICO Ltd.; Janssen Alzheimer Immunotherapy Research \& Development, LLC.; Johnson \& Johnson Pharmaceutical Research \& Development LLC.; Lumosity; Lundbeck; Merck \& Co., Inc.; Meso Scale Diagnostics, LLC.; NeuroRx Research; Neurotrack Technologies; Novartis Pharmaceuticals Corporation; Pfizer Inc.; Piramal Imaging; Servier; Takeda Pharmaceutical Company; and Transition Therapeutics. The Canadian Institutes of Health Research is providing funds to support ADNI clinical sites in Canada. Private sector contributions are facilitated by the Foundation for the National Institutes of Health (www.fnih.org). The grantee organization is the Northern California Institute for Research and Education, and the study is coordinated by the Alzheimer's Therapeutic Research Institute at the University of Southern California. ADNI data are disseminated by the Laboratory for Neuro Imaging at the University of Southern California.}
\item[\bf Code availability] The software components implementing the manifold GCN block have been released as part of the open-source \texttt{Morphomatics}~\citep{Morphomatics} library.
\item[\bf Author contribution] M.H. and C.v.T. wrote the main manuscript text and conducted the experiments. M.H., G.S., and C.v.T. developed the concepts and revised the manuscript. ADNI provides data and is responsible for the design and implementation of data collection.
\end{description}

\begin{appendices}

\renewcommand{\thefigure}{\arabic{figure}}  
\setcounter{figure}{5}  

\section{Properties of the Graph Diffusion Equation} \label{app:diff_equation}
    This section discusses some properties of the diffusion equation~(\ref{eq:diffusion}). In particular, we prove Theorem \ref{thm:sln}.
    
    For a graph $G = (V, E, w, f)$ with (ordered) vertices $V = \{v_1,\dots,v_n \}$, we define
    \begin{align*}
        \Omega_G := &\{\zb p = (p_1,\ldots,p_n)\in M^n \ |\ \exists f \in \Hsp 
        \\
        & \text{such that} \; f(v_i) = p_i, \; i=1,\ldots,n\}.
    \end{align*}
    This set is open since every maximal neighborhood $D_p \subseteq T_p M$ in which $\exp_p$ is a diffeomorphism is open. Therefore, $\Omega_G$ is a submanifold of $M^n$ (of the same dimension). If $M$ is a Hadamard manifold, then $\Omega_G = M^n$.

     Let $J_i \subseteq \{1,\dots,n\}$ denote the index set of the neighbors of $v_i$. We define the following vector field on $\Omega_G$: 
    \begin{align} \label{eq:product_field}
    \zb p \mapsto    \bigg(\sum_{j \in J_i} w(v_i, v_j)  \log_{p_i} p_j \bigg)_{i=1}^n.
    \end{align}
    This vector field is smooth in $\Omega_G$, since, away from the cut locus, $\log_p(q)$ is smooth both as a function in $p$ and $q$; see, for example,~\cite[Prop.\ 18]{Petersen2006}.
    
    \begin{lemma} \label{lem:integral_curve}
    i)    Equation~\eqref{eq:diffusion} has a solution if and only if there exists an integral curve $\gamma: [0, a) \to \Omega_G$
       with vector field~\eqref{eq:product_field} starting in $(f(v_1),\dots,f(v_n))$. In this case, we have
        \begin{equation} \label{eq:integral_curve}
            \gamma_i(t) = \widetilde{f}(v_i, t), \quad i=1,\dots, n, \quad t \in [0,a).
        \end{equation}
    \\
    ii)   There exists $a \in \mathbb{R}_+$ such that  Equation~\eqref{eq:diffusion} has a unique solution on $[0,a)$, which is moreover smooth.        
    \end{lemma}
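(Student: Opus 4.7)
My plan is to prove the two parts essentially by unpacking definitions and then appealing to the standard ODE existence and uniqueness theorem on manifolds applied to the smooth vector field \eqref{eq:product_field} on $\Omega_G$.

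For part (i), I would start by substituting the definition of the graph Laplacian \eqref{eq:graph_laplace} into Equation~\eqref{eq:diffusion}. The two minus signs cancel, so the equation at vertex $v_i$ becomes
\[
\frac{\partial}{\partial t}\widetilde{f}(v_i,t) = \sum_{j \in J_i} w(v_i,v_j)\,\log_{\widetilde{f}(v_i,t)}\widetilde{f}(v_j,t).
\]
Collecting the $n$ equations into a single equation on the product, a map $t\mapsto(\widetilde{f}(v_1,t),\ldots,\widetilde{f}(v_n,t))\in \Omega_G$ satisfies \eqref{eq:diffusion} if and only if it is an integral curve of the vector field \eqref{eq:product_field} starting at $(f(v_1),\ldots,f(v_n))$. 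This identification is the content of \eqref{eq:integral_curve}, so the equivalence in (i) is immediate. I would also note that an integral curve takes values in $\Omega_G$ by definition, which ensures that all the logarithms appearing on the right-hand side remain well defined along the curve.

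For part (ii), the key observation---already recorded just above the lemma---is that the vector field \eqref{eq:product_field} is \emph{smooth} on the open submanifold $\Omega_G\subseteq M^n$, since $\log_p(q)$ is smooth jointly in $(p,q)$ away from the cut locus (by~\cite[Prop.\ 18]{Petersen2006}) and every relevant pair $(p_i,p_j)$ with $j\in J_i$ lies off the cut locus precisely because $\zb p\in\Omega_G$. Given this, I would invoke the standard existence and uniqueness theorem for ODEs defined by smooth vector fields on a smooth manifold (Picard--Lindelöf in local coordinates): there exist $a>0$ and a unique integral curve $\gamma:[0,a)\to\Omega_G$ starting at $(f(v_1),\ldots,f(v_n))$. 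Smoothness of $\gamma$ in $t$ follows from smoothness of the vector field by a standard bootstrap argument (differentiating the ODE repeatedly). Combined with (i), this yields the existence of a unique smooth solution to \eqref{eq:diffusion} on $[0,a)$.

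The only nontrivial point is checking smoothness of \eqref{eq:product_field} on $\Omega_G$, and this is handled by the smoothness of $\log$ off the cut locus together with the very definition of $\Omega_G$; once that is in place, everything reduces to the textbook ODE theorem. No global hypotheses (such as those of Theorem~\ref{thm:sln}) are needed here, since we only claim local-in-time existence.
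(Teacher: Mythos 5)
Your proof is correct and follows essentially the same route as the paper's: both parts identify Equation~\eqref{eq:diffusion} with the integral-curve equation for the smooth vector field~\eqref{eq:product_field} on $\Omega_G$, and then invoke the standard existence, uniqueness, and smoothness theorem for integral curves of smooth vector fields on a manifold (the paper cites~\cite[Prop.\ 9.2]{Lee2012}, which is the same statement you derive via Picard--Lindelöf in charts).
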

    
    \begin{proof}
    i)    Equation~\eqref{eq:diffusion} is an autonomous system of $n$ ordinary differential equations on $M$. Indeed, for $\widetilde{f}(v_i,t) := p_i$, $i=1,\dots,n$, we find
        \begin{equation} \label{eq:hel}
            -\Delta \widetilde{f}(v_i, t) = \sum_{j \in J_i} w(v_i, v_j) \log_{p_i}(p_{j}).
        \end{equation}
        Therefore, if $\widetilde{f}$ solves~\eqref{eq:diffusion}, 
        then we can define the curve $\gamma$ by~\eqref{eq:integral_curve}. Now, Equation~\eqref{eq:hel} implies that the derivative of $\gamma$ at $\zb p$ is given by~\eqref{eq:product_field}, so it is an integral curve of the above vector field.  The other direction works by going backward.
        \\
        ii) If $\gamma$ is smooth, then also $\widetilde f$.
        The result now follows from part i) since integral curves of smooth vector fields that start at an arbitrary point and live for a positive amount of time always exist and are smooth~\citep[Prop.\ 9.2]{Lee2012}.
    \end{proof}
    
        Clearly, if $M$ has a cut locus, then $\Omega_G$ is not (geodesically) complete; when the integral curve hits the boundary, it stops. However, this cannot happen if the data is sufficiently localized, as shown in this appendix.
    
    In the following, a \emph{bounding ball of a graph} is the smallest closed geodesic ball that contains the features of the graph. It exists, for example, if the graph is contained in the maximal neighborhood $D_pM$ of some exponential $\exp_p$. We call a closed geodesic ball convex if it is a normal convex neighborhoods. It is well known that sufficiently small geodesic balls are zu convex~\citep{Petersen2006}. 
    
    Now we can prove Theorem~\ref{thm:sln}, which says that the graph diffusion ``lives forever'' if the bounding ball of the initial features is convex.
    
    \begin{proof}({\bf Proof of Theorem \ref{thm:sln}})
    \\
        
        We know from Lemma~\eqref{lem:integral_curve} that there is a solution $\widetilde{f}$ that lives at least for some time $a > 0$. We must show that $a = \infty$, and we do so by contradiction.

        Assume, therefore, that $a < \infty$. Then, the Escape Lemma~\citep[Lem.\ 9.19]{Lee2012} implies that the corresponding integral curve $\gamma$ leaves every compact subset of $\Omega_G$ before it stops at time $a$. Let $\overline{B}$ be the bounding ball of $G$. Clearly, $\overline{B}^{n}$ is a compact subset of $\Omega_G$. We will show that $\gamma$ cannot leave $\overline{B}^{n}$, leading to the desired contradiction.
        
        To this end, let $v \in V$ be a node whose feature 
        lies on the boundary $\partial\overline{B}$ of $\overline{B}$. Then, because $\sum_{u \sim v} w(v, u) \le 1$ and because connecting geodesics never leave $\overline{B}$, the vector $-\Delta f(v)$ either points to the interior of $\overline{B}$ or along its boundary.
        This is also true for any other $\widehat f \in \Hsp$ for which at least one feature lies on $\partial \overline{B}$ and all others in the interior of $ \overline{B}$.
        Hence, at the boundary $\partial \overline{B}^{n}$, the vectors~\eqref{eq:product_field} point either into the interior of $\overline{B}^{n}$ or are tangent to $\partial \overline{B}^{n}$.
        But then, the integral curve $\gamma$ cannot leave $\overline{B}^{n}$; see~\cite[Lem.\ 9.33]{Lee2012}. 
    \end{proof}
    \begin{example}
        On a $d$-dimensional sphere, any graph contained in an open hemisphere has a convex bounding ball.
    \end{example}

    We now investigate graphs that are ``in equilibrium".
    \begin{definition}
        We call a graph $G=(V,E,w,f)$ stable under diffusion if the corresponding solution of~\eqref{eq:diffusion} is independent of $t$, that is, $\widetilde{f}(v, t) = f(v)$ for all $t \in [0, \infty)$.
    \end{definition}
    As for the corresponding process in the Euclidean space, a graph with constant feature map is stable 
    under diffusion, and many initial configurations will converge to one. (Figure~\ref{fig:diffusion} shows an example of such a case.) Indeed, we conjecture that if a graph's bounding ball is convex, then its features will always converge to a constant point. 
    
    The following example shows that \textit{non}-constant stable functions exist in some manifolds when the support of the features is large enough. 
    
        \begin{figure}[t]
            \centering
            \includegraphics[width=0.3\textwidth]{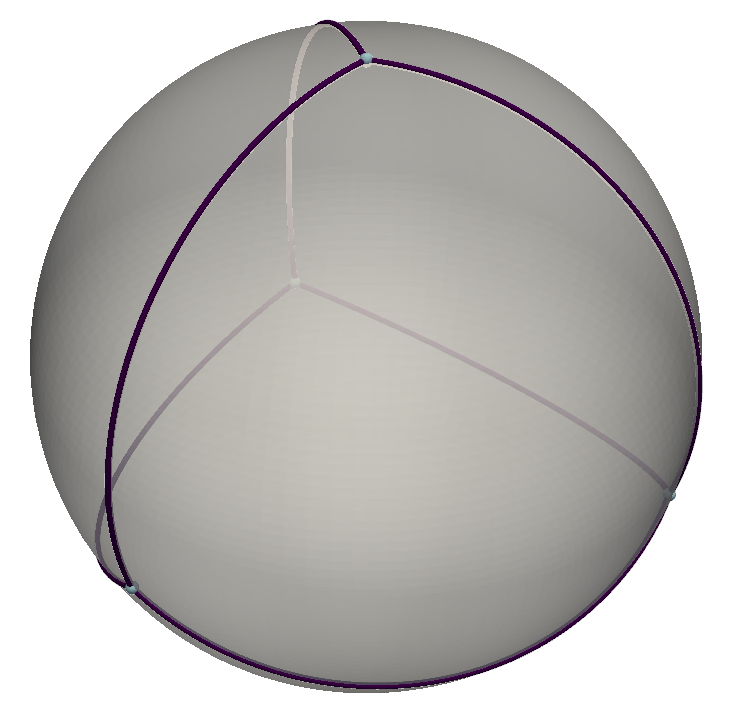}
            \caption{Stable tetrahedron-graph on $\textnormal{S}^2$}
            \label{fig:tetrahedron}
        \end{figure}
        
   \begin{example} \label{ex:tetrahedron}     
        We can inscribe a regular tetrahedron into the $2$-sphere as shown in Figure~\ref{fig:tetrahedron}. Choosing a value of $1/3$ for all edge weights, the right-hand side of Equation~(\ref{eq:graph_laplace}) vanishes at each node. The graph is thus stable under diffusion. 
    \end{example}

    The above example represents a special case of a class of diffusion-stable graphs. Let $(w_i)_{i=1}^m$ be positive weights such that $\sum_{i=1}^m w_i = 1$. Remember that the weighted Fréchet mean of $m$ points $(p_i)_{i=1}^m \in M^m$ is defined by
    \begin{align} \label{eq:wfm}
        \textnormal{wFM}&\Big((w_i)_{i=1}^m, (p_i)_{i=1}^m \Big) \nonumber \\
        &:= \argmin_{p \in M} \sum_{i=1}^m w_i\, \dist(p,p_i)^2.
    \end{align}
    Set $\overline{p}:=\textnormal{wFM}\Big((w_i)_{i=1}^m, (p_i)_{i=1}^m \Big)$.
    Taking the gradient of the sum on the right-hand side of~\eqref{eq:wfm} yields the optimality condition
    \begin{equation}\label{eq:wfm_opt}
        \sum_{i=1}^m -2 w_i \log_{\overline{p}}(p_i) = 0.
    \end{equation}
    This implies the following partial characterization of stable configurations:
    \begin{proposition}
        If for each node $v \in V$ with neighbors $v_1,\dots,v_{m_v} \in V$ we have
        $$f(v) = \textnormal{wFM}\Big(\big(w(v,v_i) \big)_{i=1}^{m_v}, \big(f(v_i) \big)_{i=1}^{m_v} \Big),$$ then the graph is stable under diffusion.
    \end{proposition}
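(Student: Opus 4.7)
The strategy is to show that the hypothesis forces the right-hand side of the diffusion equation to vanish identically on $V$ at time $t=0$, so that the constant trajectory $t \mapsto f$ is a solution, and then to invoke uniqueness (Lemma \ref{lem:integral_curve}\,ii) to conclude that it is \emph{the} solution. Thus stability is equivalent to showing $\Delta f(v) = 0$ for every $v \in V$.

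To establish the vanishing of $\Delta f(v)$, I would apply the first-order optimality condition \eqref{eq:wfm_opt}. Fix $v \in V$ with neighbors $u_1,\ldots,u_{m_v}$ and set $W_v := \sum_{i=1}^{m_v} w(v,u_i) > 0$. Since the argmin in \eqref{eq:wfm} is unaffected by positive scalar rescaling of the weights, the point $f(v)$ is also the weighted Fréchet mean of $\big(f(u_i)\big)_{i=1}^{m_v}$ with the normalized weights $\tilde{w}_i := w(v,u_i)/W_v$, which sum to one. Applying \eqref{eq:wfm_opt} with $\overline{p} = f(v)$ therefore yields
\begin{equation*}
\sum_{i=1}^{m_v} \tilde{w}_i \log_{f(v)}\! f(u_i) \;=\; 0,
\end{equation*}
and multiplying by $W_v$ gives $\sum_{i=1}^{m_v} w(v,u_i)\log_{f(v)} f(u_i) = 0$. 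By definition \eqref{eq:graph_laplace}, this is precisely $\Delta f(v) = 0$.

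Since $v$ was arbitrary, the vector field \eqref{eq:product_field} vanishes at the point $\zb p = (f(v_1),\ldots,f(v_n)) \in \Omega_G$. Hence the constant curve $\gamma(t) \equiv \zb p$ is an integral curve of this field defined for all $t \ge 0$, and the associated map $\widetilde{f}(v,t) := f(v)$ satisfies \eqref{eq:diffusion} (both sides are zero). By the uniqueness part of Lemma \ref{lem:integral_curve}, this is \emph{the} solution of the initial value problem on any interval on which a solution exists, so $\widetilde{f}(\cdot,t) = f$ for all $t$ in the lifespan of the solution, i.e.\ the graph is stable.

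There is no real obstacle here, only a minor bookkeeping point: the Fréchet-mean weights in definition \eqref{eq:wfm} are normalized to sum to one, whereas the edge weights $w(v,u_i)$ in the proposition are not, so I would make the rescaling step explicit to guarantee that \eqref{eq:wfm_opt} can be invoked and then rescaled back. Everything else is just unwinding definitions and appealing to the uniqueness already proved in Appendix \ref{app:diff_equation}.
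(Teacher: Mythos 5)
Your proof is correct and follows essentially the same route as the paper's, which simply observes that the right-hand side of the diffusion equation vanishes by the optimality condition \eqref{eq:wfm_opt}. You supply two details the paper leaves implicit: the rescaling of the edge weights so that they sum to one (needed to match the normalization in \eqref{eq:wfm}), and the appeal to uniqueness of the integral curve to conclude that the constant trajectory is in fact \emph{the} solution rather than merely \emph{a} solution. Both are worthwhile clarifications, but they do not change the underlying argument.
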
    
    \begin{proof}
        The result follows as the right-hand side of the diffusion equation~\eqref{eq:diffusion} vanishes using~\eqref{eq:wfm_opt}.
    \end{proof}
    The proposition yields non-constant, stable solutions for positively curved spaces towards which a diffusion process can flow. In particular, as the hippocampi meshes are topological spheres, the graphs used in this experiment had global support on the 2-sphere. We thus think that the trained neural network observed how the graphs diffused towards a globally supported stable configuration similar to the tetrahedron graph from Figure~\ref{fig:tetrahedron}. This makes contact with the cut locus even more unlikely since features do not have to ``cross an equator'' to diffuse toward a constant function.
    
\section{The \texorpdfstring{$\ell$}{l}-step map and normal convex neighborhoods} \label{app:l_step}
    In this appendix, we give some theoretical results on the behavior of the $\ell$-step map \eqref{l-step} for localized data.
    \begin{proposition} \label{prop:convex_ball}
        Let $M$ be a Riemannian manifold, and let $G = (V, E, w, f)$ be a graph with positive weights and features $f \in \Hsp$ such that the bounding ball $\overline{B}$ is convex. Assume $\sum_{u \sim v} w(v, u) \le 1$ for all $v \in V$. Then, there is a maximal $a > 0$ such that for all $t \le a$ 
        $$\step^{\ell}_{t,\theta}(f)(V) \subset \overline{B}.$$
    \end{proposition}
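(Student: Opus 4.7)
My plan is to proceed by induction on $\ell$. The inductive step reduces to the following single-step claim: for any $g \in \Hsp$ with $g(V) \subseteq \overline{B}$, there is an $a(g) > 0$ such that $\step_{t,\alpha}(g)(V) \subseteq \overline{B}$ for all $t \le a(g)$. Since $\overline{B}$, being a convex closed geodesic ball, lies inside the domain where the logarithm is well defined, admissibility ($\step^{k-1}_{t,\alpha}(f)\in\Hsp$) is automatic so long as the iterates remain in $\overline{B}$. Applying the single-step claim to $g = \step^{k-1}_{t,\alpha}(f)$ for $k=1,\dots,\ell$ and taking the minimum of the resulting $a(g)$'s yields the desired $a$.

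The central geometric fact for the single step is already contained in the proof of Theorem~\ref{thm:sln}: at each node $v$, the vector $-\Delta g(v) = \sum_{u\sim v} w(v,u)\,\log_{g(v)} g(u)$ lies in the tangent cone $T_{g(v)}\overline{B}$ of $\overline{B}$ at $g(v)$. This is because convexity of $\overline{B}$ means that every $\log_{g(v)}g(u)$ is the initial velocity of a geodesic that stays in $\overline{B}$, hence lies in the tangent cone; since the tangent cone is a convex cone and the weights $w(v,u)$ are positive with $\sum_{u\sim v} w(v,u)\le 1$, the combination is again in $T_{g(v)}\overline{B}$. The nonlinearity $\sigma^\alpha_{g(v)}$ either preserves this vector or replaces it by zero, so the step direction is in the tangent cone in either case. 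For interior points $g(v)\in\textnormal{int}\,\overline{B}$, continuity of $(t,g)\mapsto \step_{t,\alpha}(g)(v)$ together with the open ball immediately gives a positive time interval on which the step stays in $\overline{B}$. For boundary points, the tangent-cone property says that the initial velocity of the geodesic $\exp_{g(v)}(-t\sigma^\alpha_{g(v)}(\Delta g(v)))$ points into $\overline{B}$.

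To upgrade this pointwise statement to the uniform $a$ needed for the induction, the plan is to use compactness of $\overline{B}^{|V|}$. The map $(g,t)\mapsto\step_{t,\alpha}(g)$ is continuous on $\overline{B}^{|V|}\times[0,T]$ for sufficiently small $T$, with values in $M^{|V|}$. The set $\{(g,t):\step_{t,\alpha}(g)(V)\subseteq\overline{B}\}$ is closed and contains the compact slice $\overline{B}^{|V|}\times\{0\}$; combining this with the tangent-cone argument (which prevents the step map from crossing $\partial\overline{B}$ transversally at $t=0$) and a standard tube-neighborhood extraction yields a uniform $a>0$ on which $\step_{t,\alpha}$ maps $\overline{B}^{|V|}$ into itself. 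Taking the supremum of all such $a$ gives the maximal value asserted by the proposition.

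The main obstacle is the degenerate case in which $g(v)\in\partial\overline{B}$ and the step direction $-\sigma^\alpha_{g(v)}(\Delta g(v))$ is exactly tangent to $\partial\overline{B}$. A first-order analysis then gives no strict inward motion, and in positively curved ambient geometries the geodesic along a tangent direction can, a priori, drift out of $\overline{B}$ by a quantity of order $t^2$. To close this case, I would bring in the squared-distance function $p\mapsto\dist(p,c)^2$ to the center $c$ of $\overline{B}$, which is geodesically convex on $\overline{B}$: its convexity plus the non-positive first derivative along the step geodesic allows one to bound the distance to $c$ in terms of the endpoint distances already lying in $[0,r]$, yielding the required invariance for small $t$. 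Alternatively, one can pass to a slightly enlarged convex ball and exploit the inductive control on the number of iterations $\ell$ together with the compactness argument above.
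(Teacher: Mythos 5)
Your proof attempt is built on the same core geometric observation as the paper's --- that at a boundary node, the aggregated log-vector $-\Delta f(v)$ lies in the tangent cone of $\overline{B}$, so the explicit Euler step cannot cross $\partial\overline{B}$ transversally --- but you flesh out several points that the paper's proof simply passes over. The published proof treats only a single step (it never addresses the iteration to $\ell>1$), ignores the activation $\sigma^\alpha$ entirely (it writes $\exp_{f(v)}(-t\Delta f(v))$ without the nonlinearity), and asserts the existence of $t_v>0$ in one line without confronting the tangential case; it also sets $a=\max_v t_v$ where $\min$ is clearly meant. Your induction-plus-compactness scheme for the iteration, and your reduction of the activated step to the unactivated one (since $\step_{t,\alpha}(g)(v)\in\{\exp_{g(v)}(-t\Delta g(v)),g(v)\}$ and both candidates lie in $\overline{B}$ if the first does), are genuine improvements in rigor. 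Note, though, that $(t,g)\mapsto\step_{t,\alpha}(g)$ is \emph{not} continuous in $g$ because $\sigma^\alpha$ jumps at $\|X\|=\alpha$; the compactness argument must be run on the unactivated map, which is continuous, and then transferred by the reduction just described.

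The one place where your plan does not yet close is exactly the obstacle you flagged: the degenerate tangential case. Your proposed fix --- invoke convexity of $\rho:=\dist(\cdot,c)^2$ together with the nonpositive first derivative at $t=0$ --- does not by itself give invariance: a convex function with vanishing derivative at a point can still increase immediately afterward (think $t\mapsto t^2$), and that is precisely what happens for a geodesic issued tangentially from $\partial\overline{B}$ in positive curvature. What actually rules out the bad case is \emph{strict} convexity of $\rho$ on $\overline{B}$: the first-order strict convexity inequality $\rho(q)>\rho(p)+\langle\grad\rho(p),\log_p q\rangle$ for $q\in\overline{B}\setminus\{p\}$ forces $\langle\grad\rho(p),\log_p f(u)\rangle<0$ for every neighbor with $f(u)\ne p$, so that $\langle\grad\rho(p),-\Delta f(v)\rangle<0$ strictly whenever $-\Delta f(v)\ne 0$; the tangent direction with a nonzero step therefore cannot arise, and only the strictly inward and zero-step cases remain. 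Strict convexity of $\rho$ is not literally part of the paper's hypothesis (only convexity of the ball), so the paper's proof implicitly relies on it too; if you do not want to assume it, your alternative route through a slightly enlarged convex ball yields a correspondingly weaker containment, not invariance of $\overline{B}$ itself.
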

    
    \begin{proof}
        Let $v \in V$ be a node whose feature $f(v)$ lies on the boundary of $\overline{B}$. Then, because $\sum_{u \sim v} w(v, u) \le 1$, the vector $-\Delta f(v)$ either points to the interior of $\overline{B}$ or along its boundary. Thus, there is a maximal $t_v > 0$ such that $\exp_{f(v)}(-t\Delta f(v)) \in \overline{B}$ for all $t \le t_v$. Obviously, such a number also exists for vertices whose features are in the interior of $\overline{B}$. Hence, 
        $a = \max_{v \in V} t_v$.
    \end{proof}
    
    We directly get two corollaries.
    
    \begin{corollary} \label{cor:1}
        Let the assumptions of  Proposition~\ref{prop:convex_ball} be fulfilled.
        Then there exists $a > 0$ such that for all $t \le a$ 
        $$\step^{\ell}_{t,\theta}(f)(v) \in \overline{B}.$$
        In particular, every step of the $\ell$-step map is well-defined.
    \end{corollary}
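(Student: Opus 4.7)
The plan is to deduce Corollary~\ref{cor:1} from Proposition~\ref{prop:convex_ball} by a straightforward induction on the number of iterations. The essential observation is that the boundary analysis in the proof of Proposition~\ref{prop:convex_ball} depends only on convexity of $\overline{B}$ and on the sub-stochasticity $\sum_{u\sim v} w(v,u) \le 1$; it does not require $\overline{B}$ to be the bounding ball of the particular graph under consideration. Hence, the same estimate applies to any configuration of features lying inside the fixed convex set $\overline{B}$.

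Concretely, I would show by induction on $k \in \{0, 1, \dots, \ell\}$ that there is $a_k > 0$ such that, for every $t \le a_k$, the iterate $\step^k_{t,\alpha}(f)$ is well-defined (that is, each of the intermediate maps $\step^j_{t,\alpha}(f)$ with $j < k$ lies in $\Hsp$) and satisfies $\step^k_{t,\alpha}(f)(V) \subset \overline{B}$. The base case $k=0$ is immediate. For the inductive step, convexity of $\overline{B}$ guarantees that any two points in $\overline{B}$ are joined by a unique minimizing geodesic contained in $\overline{B}$; in particular, for every edge $(v,u) \in E$, the logarithm $\log_{\step^k_{t,\alpha}(f)(v)} \step^k_{t,\alpha}(f)(u)$ exists, so $\step^k_{t,\alpha}(f) \in \Hsp$ and $\step^{k+1}_{t,\alpha}(f)$ is well-defined. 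Applying Proposition~\ref{prop:convex_ball} to the graph with features $\step^k_{t,\alpha}(f)$ (all contained in the convex ball $\overline{B}$) then furnishes some $a'_k > 0$ such that the next step keeps the features in $\overline{B}$ for all $t \le a'_k$. The activation $\sigma^\alpha$ causes no difficulty here, since it either shortens the update along the same geodesic or zeros it out entirely, and neither operation can push the iterate outside $\overline{B}$ when the un-activated Euler step does not. Setting $a_{k+1} := \min(a_k, a'_k)$ closes the induction, and $a := a_\ell > 0$ yields the desired bound.

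The main obstacle I anticipate is conceptual rather than technical: one must recognize that Proposition~\ref{prop:convex_ball}, although phrased in terms of \emph{the} bounding ball of the initial graph, really establishes the inward-pointing behavior of the Euler update at any boundary point of \emph{any} fixed closed convex geodesic ball containing all current features. Once this re-reading is justified, the induction is essentially mechanical. A secondary point worth spelling out is the interplay with the activation: since $\sigma^\alpha_p$ replaces $X$ by either $X$ or $0$, the nonlinear step is always a geodesic arc of length at most $t\|\Delta \step^k_{t,\alpha}(f)(v)\|$ along the same direction as the plain Euler step, so no separate estimate is needed.
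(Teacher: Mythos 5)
Your inductive scheme is a reasonable way to make precise what the paper leaves implicit (the paper only writes ``We directly get two corollaries'' and gives no proof), and your key observation is correct and essential: the inward-pointing argument in the proof of Proposition~\ref{prop:convex_ball} uses only that $\overline{B}$ is a closed convex geodesic ball containing all current features and that $\sum_{u\sim v}w(v,u)\le 1$; it never uses that $\overline{B}$ is the \emph{bounding} ball of the configuration at hand. This is exactly what lets $\overline{B}$ serve as a fixed inductive invariant, and it is also what underlies the well-definedness claim, since convexity guarantees that $\log_p q$ exists for all $p,q\in\overline{B}$, so $\step^k_{t,\alpha}(f)(V)\subset\overline{B}$ implies $\step^k_{t,\alpha}(f)\in\Hsp$.

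However, there is a real gap in the inductive step. The quantity you call $a'_k$ is obtained by applying Proposition~\ref{prop:convex_ball} to the configuration $\step^k_{t,\alpha}(f)$, which itself depends on $t$; so $a'_k$ is a function $a'_k(t)$, not a fixed number, and what the induction requires is a single $a_{k+1}>0$ with $t<a'_k(t)$ for all $t\le a_{k+1}$. Writing $a_{k+1}:=\min(a_k,a'_k)$ therefore does not close the induction as stated. Two ways to repair it: either note that $a'_k(0)$ coincides with the one-step bound for $f$ (hence is positive) and that $t\mapsto\step^k_{t,\alpha}(f)$ is continuous near $t=0$ on each side of the activation threshold, so $\liminf_{t\to 0^+}a'_k(t)>0$ and a positive $a_{k+1}$ exists; or, more cleanly, establish a \emph{uniform} one-step statement of the form ``there is $a_0>0$, depending only on $\overline{B}$, the combinatorics and the weights, such that for every $g$ with $g(V)\subset\overline{B}$ and every $t\le a_0$ one has $\step_{t,\alpha}(g)(V)\subset\overline{B}$,'' after which the induction is immediate with $a:=a_0$.

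A minor imprecision: $\sigma^\alpha_p$ is a hard threshold, not a shrinkage --- it returns either $X$ unchanged or $0$, never a shortened vector. Your conclusion is nonetheless right: when it returns $0$ the node is unmoved (trivially remaining in $\overline{B}$), and when it returns $X$ the activated step coincides with the plain Euler step, so the inward-pointing argument applies unchanged.
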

    
    By Corollary \ref{cor:1}, a diffusion layer cannot run into trouble with a cut locus, 
    if the bounding ball of an input graph is convex and 
    the time parameter $t$ is smaller than some $a >0$.
    The following corollary tells us yet more about the behavior of the $\ell$-step map.
    
    \begin{corollary}
       Let the assumptions of  Proposition~\ref{prop:convex_ball} be fulfilled. Then there exists $a > 0$ such that for all $t < a$ 
        \begin{align*}
            \max_{v, u \in V} \dist \big(\step^{\ell}_{t,\theta}(f)(v),\ &\step^{\ell}_{t,\theta}(f)(u) \big) \\
            &< \max_{v, u \in V}\dist \big(f(v), f(u) \big).
        \end{align*}
    \end{corollary}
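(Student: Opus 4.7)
The plan is to reduce the claim to the case $\ell = 1$ and then perform a first-variation analysis of the pairwise distance. The reduction is immediate from Corollary~\ref{cor:1}: for $t$ below the threshold of Proposition~\ref{prop:convex_ball}, one step keeps all features in the convex ball $\overline{B}$, which is then itself a bounding ball for the updated configuration; iterating the single-step estimate yields the $\ell$-step version with the same threshold $a$. For a single step, fix $(v,u)\in V\times V$. For small $t$ both $\step_{t,\alpha}(f)(v)$ and $\step_{t,\alpha}(f)(u)$ stay in $\overline{B}$, hence away from any cut locus, so the map $t\mapsto \dist(\step_{t,\alpha}(f)(v),\step_{t,\alpha}(f)(u))$ is smooth near $0$.

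By the first-variation formula, its derivative at $t=0$ equals $-\langle V_v,\hat X_{vu}\rangle - \langle V_u,\hat X_{uv}\rangle$, where $V_v := -\sigma_{f(v)}^\alpha(\Delta f(v))$ is the step's initial velocity at $f(v)$ and $\hat X_{vu}\in T_{f(v)}M$ denotes the unit tangent at $f(v)$ pointing toward $f(u)$. Writing $V_v = \sigma_{f(v)}^\alpha\bigl(\sum_{u'\sim v}w(v,u')\log_{f(v)}f(u')\bigr)$ and using $\sum_{u'\sim v}w(v,u')\le 1$, the same convexity reasoning as in the proof of Proposition~\ref{prop:convex_ball} shows that $V_v$ lies in the inward tangent cone of $\overline{B}$ at $f(v)$. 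For a pair $(v^*,u^*)$ realizing the diameter, $f(v^*)$ and $f(u^*)$ must lie on $\partial\overline{B}$ in maximally-separated positions, so the directions $\hat X_{v^*u^*}$ and $\hat X_{u^*v^*}$ also lie in the respective inward cones and make non-obtuse angles with every $\log_{f(v^*)}f(u')$. This forces $\langle V_{v^*},\hat X_{v^*u^*}\rangle\ge 0$ and symmetrically at $u^*$, so the derivative is non-positive.

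The main obstacle is promoting non-positivity to a strict decrease that is uniform over all pairs. Strict negativity at a diameter-realizing pair fails in degenerate configurations, such as the stable tetrahedron of Example~\ref{ex:tetrahedron} where $\Delta f\equiv 0$, so the argument needs a non-degeneracy input---for instance that some endpoint of a diameter-realizing pair has a Laplacian with norm exceeding $\alpha$ and a nonzero component along the connecting geodesic, which holds automatically when the graph is connected and the features do not form a Fr\'{e}chet-mean stable configuration. Under such a hypothesis, strict negativity at every diameter-realizing pair combined with compactness of $V\times V$ and continuous dependence of $\dist(\step^\ell_{t,\alpha}(f)(v),\step^\ell_{t,\alpha}(f)(u))$ on $t$ yields a common threshold $a>0$ below which the maximum over all pairs is strictly smaller than $\max_{v',u'}\dist(f(v'),f(u'))$, which is the claim.
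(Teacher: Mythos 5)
The paper does not prove this corollary at all: it writes ``We directly get two corollaries'' after Proposition~\ref{prop:convex_ball}, treating the statement as immediate. It is not immediate, and in fact the strict inequality as printed is \emph{false} without further hypotheses. For instance, take $V=\{v_1,v_2\}$, $E=\emptyset$, and $f(v_1)\neq f(v_2)$: the bounding ball (the metric ball with $f(v_1),f(v_2)$ as a diameter) is convex, the weight condition is vacuous, yet $\Delta f\equiv 0$ so $\step^\ell_{t,\alpha}(f)=f$ for all $t$ and the diameter never decreases. The same failure occurs whenever $\|\Delta f(v)\|_{f(v)}<\alpha$ for every $v$, since the activation then freezes every node. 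Your instinct that a non-degeneracy hypothesis is required is therefore correct, and your overall plan (first-variation analysis of pairwise distances, then compactness to get a uniform threshold) is a reasonable route to the corrected $\le$ statement together with a characterization of when strictness holds. But several specific claims in your argument are wrong and would need repair.

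First, the tetrahedron of Example~\ref{ex:tetrahedron} is not a valid counterexample here: the smallest geodesic ball on $S^2$ containing the four vertices of an inscribed regular tetrahedron has radius greater than $\pi/2$, so it exceeds a hemisphere and is not convex; the hypotheses of Proposition~\ref{prop:convex_ball} are not satisfied. Use the edgeless graph or the large-$\alpha$ case instead. Second, your assertion that a diameter-realizing pair $(v^*,u^*)$ ``must lie on $\partial\overline{B}$ in maximally-separated positions'' is false on both counts. A diameter-realizing point of a finite set need not be on the boundary of its minimal enclosing ball: in $\mathbb{R}^2$, take the three vertices of an equilateral triangle inscribed in the unit circle together with the interior point $(-0.8,0)$; the minimal enclosing ball is still the unit disk, but the diameter $1.8$ is realized by $(1,0)$ and the interior point $(-0.8,0)$. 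And even when both endpoints do sit on $\partial\overline{B}$, they need not be antipodal there (any acute triangle inscribed in its circumcircle). Third, the non-obtuse angle claim is true, but it has nothing to do with $\overline{B}$: it follows directly from the diameter-realizing property, because every other feature $f(u')$ satisfies $\dist(f(v^*),f(u'))\le D$ and $\dist(f(u^*),f(u'))\le D$, which in Euclidean or nonpositively curved settings forces $\langle\log_{f(v^*)}f(u'),\log_{f(v^*)}f(u^*)\rangle\ge 0$ by the law of cosines; in positively curved manifolds this needs a comparison-geometry argument confined to the convex ball. With this corrected justification, your derivative computation gives $\frac{\mathrm{d}}{\mathrm{d}t}\dist(\step_{t,\alpha}f(v^*),\step_{t,\alpha}f(u^*))\big|_{t=0}\le 0$, yielding non-increase; promoting this to strict decrease indeed needs the kind of non-degeneracy hypothesis you propose (at least one endpoint of some diameter-realizing pair has $\|\Delta f\|\ge\alpha$ and a Laplacian with nonzero component along the connecting geodesic). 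Finally, the reduction to $\ell=1$ via Corollary~\ref{cor:1} is not automatic for the strict inequality: after one step $\overline{B}$ is still a convex ball containing the features, but it is no longer the minimal one, and you must re-verify the strictness hypothesis for the updated configuration at each iterate. In summary, you are right that the corollary cannot be proved as stated; the paper's claim of directness is overoptimistic, and a correct version should either weaken $<$ to $\le$ or add a non-degeneracy assumption, with the boundary-maximality step of your proof replaced by the diameter-based angle argument above.
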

    
    In machine learning applications, we expect that it is preferable when the graphs shrink in diameter (or, at least, do not expand) since most deep learning layers show such a ``contractive'' behavior; see~\cite{chakraborty2020manifoldnet} for a more in-depth discussion on this topic. 
    The corollaries suggest that the diffusion times of a diffusion layer should be initialized 
    close to zero, and the weights of a graph should be normalized, as proposed in Section~\ref{sec:GCN}.

\end{appendices}

\bibliography{our_bibliography}%

\end{document}